




\documentclass{ecai_2025} 


\usepackage{graphicx}


\newcommand{\BibTeX}{B\kern-.05em{\sc i\kern-.025em b}\kern-.08em\TeX}

\usepackage[T1]{fontenc}
\usepackage{latexsym}

\usepackage{booktabs} 
\usepackage{nicefrac}       
\usepackage{amsmath,amssymb}
\usepackage{mathtools}
\usepackage{pifont}
\usepackage[makeroom]{cancel}
\usepackage{placeins}
\usepackage{algorithm} 
\usepackage{algorithmic}
\usepackage{subcaption}
\usepackage{xcolor}

\usepackage{pgf}
\usepackage{tikz}
\usetikzlibrary{shapes.geometric}
\usetikzlibrary{shapes.arrows}
\usetikzlibrary{shapes,arrows,automata,positioning,fit,arrows.meta,calc}
\usetikzlibrary{decorations.markings}
\usetikzlibrary{patterns}
\usetikzlibrary{3d}
\usetikzlibrary{quotes,matrix}
\usepackage{verbatim}
\definecolor{lightblue}{HTML}{1f77b4}

\definecolor{blue1}{HTML}{00A6FB}
\definecolor{green1}{HTML}{97DB4F}

\usepackage{pgfplots}
\usepgfplotslibrary{fillbetween}
\pgfplotsset{compat=newest, 
	tick label style={font=\scriptsize},
	label style={font=\scriptsize},
	legend style={font=\scriptsize}}

\newenvironment{customlegend}[1][]{%
	\begingroup
	\csname pgfplots@init@cleared@structures\endcsname
	\pgfplotsset{#1}%
}{%
	\csname pgfplots@createlegend\endcsname
	\endgroup
}%

\def\addlegendimage{\csname pgfplots@addlegendimage\endcsname}

\pgfplotscreateplotcyclelist{custom}{%
	green1!60!black,dashed,thick,every mark/.append style={solid},mark=triangle*\\
	orange,dotted,thick,every mark/.append style={solid},mark=square*\\
	blue1,thick,every mark/.append style={solid},mark=*\\
	violet!90!white,dash dot,ultra thick\\
	mikadoYellow,densely dashed, ultra thick\\
	lapisLazuli,densely dotted, ultra thick\\
	harvardCrimson,densely dash dot, ultra thick\\
	harlequin, ultra thick\\
	neonFuchsia,densely dotted, ultra thick\\
	bronze,densely dash dot, ultra thick\\
	black, ultra thick\\
}

\usepackage[capitalize,noabbrev]{cleveref}
\usepackage{enumitem}

\usepackage{amsthm}
\theoremstyle{remark}

\theoremstyle{definition}


\usepackage{thmtools, thm-restate}

\usepackage{xspace}
\DeclareRobustCommand{\eg}{e.g.,\@\xspace}
\DeclareRobustCommand{\ie}{i.e.,\@\xspace}
\DeclareRobustCommand{\wrt}{w.r.t.\@\xspace}

\DeclareMathOperator*{\E}{\mathbb{E}}

\DeclareMathOperator*{\ssup}{\sup}

\begin{document}


\begin{frontmatter}


\paperid{7434} 


\title{On the Sample Efficiency of Abstractions and Potential-Based Reward Shaping in Reinforcement Learning}

\author[1]{\fnms{Giuseppe}~\snm{Canonaco}\thanks{Corresponding Author. Email: giuseppe.canonaco@jpmorgan.com}}
\author[2]{\fnms{Leo}~\snm{Ardon}}
\author[1]{\fnms{Alberto}~\snm{Pozanco}}
\author[1]{\fnms{Daniel}~\snm{Borrajo}}

\address[1]{AI Research Dept. JPMorganChase Madrid, ES}
\address[2]{AI Research Dept. JPMorganChase London, UK}


\begin{abstract}
The use of Potential-Based Reward Shaping (PBRS) has shown great promise in the ongoing research effort to tackle sample inefficiency in Reinforcement Learning (RL). However, choosing the right potential function remains an open challenge. Additionally, RL techniques are usually constrained to use a finite horizon for computational limitations, which introduces a bias when using PBRS.
In this paper, we first build some theoretically-grounded intuition on why selecting the potential function as the optimal value function of the task at hand produces performance advantages. We then analyse the bias induced by finite horizons in the context of PBRS producing novel insights. Finally, leveraging abstractions as a way to approximate the optimal value function of the given task, we assess the sample efficiency and performance impact of PBRS on four environments including a goal-oriented navigation task and three Arcade Learning Environments (ALE) games. Remarkably, experimental results show that we can reach the same level of performance as CNN-based solutions with a simple fully-connected network.
\end{abstract}

\end{frontmatter}


\section{Introduction}\label{sec:intro}
Reinforcement Learning (RL) is able to achieve impressive results at the cost of requiring a huge amount of samples~\citep{silver2018general, vinyals2019grandmaster, openai2019dota}.
This sample inefficiency prevents RL from being more widely applied in the real world because of the cost associated with experience collection. Thus, the increasing demand of algorithms that can counteract or mitigate this issue.

In the RL literature, many different methods tackle the above-mentioned problem~\citep{taylor2009transfer, papini2018stochastic, touati2022does}.
However, a promising way to compensate for the widely known sample inefficiency of RL is through Potential-Based Reward Shaping (PBRS)~\citep{ng1999policy}.
One property of this technique is that if we select the potential function to be the optimal value function of the problem at hand, the resulting reshaped problem will have an optimal value function that is zero everywhere, thus simplifying the learning process~\citep{ng1999policy}.
Unfortunately, the literature lacks theoretically-grounded motivations explaining why PBRS should entail sample efficiency for such a choice in RL problems.

Additionally, the optimal value function for most tasks is not available in practice, so computing estimates of that function becomes the only viable option.
In case some knowledge about the task we are facing is available (\eg via subject-matter experts), one way to approximate its optimal value function is through abstractions~\citep{Cipollone2023}.
An abstraction here represents a mapping of the original task to a simpler one, such that solving the simpler one can accelerate the learning phase in the original task itself.

Due to computational limitations, we also have to practically impose a maximum number of steps allowed to solve an RL problem (finite horizon).
This practical consideration has the disadvantage of introducing a bias in the context of PBRS as highlighted by recent literature~\citep{eck2016potential, grzes2017reward, Cipollone2023}. 
Importantly, the introduced bias may disrupt the policy ordering in the reshaped Markov Decision Process (MDP)~\citep{puterman2014markov} practically turning policies that were optimal into sub-optimal ones. 
Therefore, the need for a theoretical analysis investigating what can be preserved when applying PBRS under finite horizons.

In this paper we try to cover all the above mentioned gaps in the literature. In order to do so, we provide:
\begin{itemize}
    \item A theoretically-grounded intuition on why PBRS entails performance benefits when we select the potential function as the optimal value function of the task at hand.
    \item A formal proof for the absence of bias when using PBRS in finite-horizon goal-oriented MDPs (under some conditions on the potential if the MDP is stochastic) addressing an issue opened by~\citet{Cipollone2023}.
    \item A novel investigation of the above-mentioned bias for general MDPs, producing conditions under which we can preserve the total policy ordering (or only the optimal policy) \wrt the original infinite-horizon MDP.
    \item Leveraging abstractions as a tool to approximate the optimal value function of the task at hand, we will first provide an experimental back up of our theoretical findings in goal-oriented MDPs, outperforming~\citet{Cipollone2023}. Then, we will show how we can make a significant difference in terms of sample efficiency in the Arcade Learning Environment (ALE) benchmark~\citep{bellemare2013arcade} by accepting the bias induced by PBRS in general MDPs. Our approach achieves performances comparable or superior to the ones of Convolutional Neural Networks (CNNs)~\citep{mnih2015human}, but only using fully-connected architectures. To the best of our knowledge, this is unprecedented.
\end{itemize}

\section{Background}\label{sec:background}

In the RL setting, an agent interacts with the environment to learn a policy. After perceiving the current state of the environment, the agent executes an action that makes the environment transition into a next state. Finally, the agent collects the reward associated with the experienced transition and the cycle starts again. 

\subsection{The Reinforcement Learning Problem}
Problems in the RL paradigm are usually modeled through MDPs. An MDP is described as a tuple $\{\mathcal{S}, \mathcal{A}, \mathcal{P}, r, \gamma, \rho\}$, where $\mathcal{S}$ is the state space, $\mathcal{A}$ is the action space, $\mathcal{P}(s'|s, a)$ is the Markovian state transition function, $r(s, a, s') \in [0,\Bar{R}]$ is the reward function, $\gamma \in [0, 1)$ is the discount factor, and $\rho$ is the initial state distribution. 
Goal-oriented MDPs characterize a sub-family of MDPs $\{\mathcal{S}, \mathcal{S}_G, \mathcal{A}, \mathcal{P}, r, \gamma, \rho\}$, where $\mathcal{S}_G \subset \mathcal{S}$ is a set of goal states, and the reward function $r(s, a, s')=1$ if and only if $s \notin \mathcal{S}_G \wedge s' \in \mathcal{S}_G$; it is $0$ otherwise.

Techniques to solve an RL problem can be divided into Value-based and Policy-based algorithms. The former strive to learn the optimal (action-)value function $V^*(s)$ ($Q^*(s, a)$), whereas the latter aim at directly learning the optimal policy. $Q^*(s, a)$ is defined as the expected obtained return starting in state $s$, executing action $a$, and following the optimal policy thereafter. Instead, $V^*(s)$ can be obtained starting from $Q^*(s, a)$ and selecting $a$ according to the optimal policy, or, equivalently, selecting $a$ greedily \wrt $Q^*(s, a)$. This means that $V^*(s) = \max_a Q^*(s, a)$ and that the optimal policy can be obtained by acting greedily on the environment under the guidance of $Q^*(s, a)$. 

A current standard way to learn the above-mentioned functions is through the use of Deep Q-Networks (DQNs)~\citep{mnih2015human}. In this case,
given a dataset of collected experience $D = \left \langle s_t, a_t, r_t, s_{t+1} \right \rangle_{t=1}^N$, a DQN minimizes the following loss function during learning, leveraging samples of experience drawn uniformly at random from $D$:
\begin{equation*}
    \mathbb{E}_{(s, a, r, s') \sim D}\left[\left(r + \gamma \max_{a'}Q_{\theta^-}(s', a') - Q_{\theta}(s, a)\right)^2 \right],
\end{equation*}
where $Q_\theta$ and $Q_{\theta^-}$ are action-value functions parameterized by the weights $\theta$ and $\theta^-$ that are represented by CNNs~\citep{lecun1998gradient}. They respectively represent the actual action-value function and the target action-value function. The former is always updated while the latter only every $M$ steps.

\subsection{Potential-Based Reward Shaping}\label{sec:PBRS}

The idea behind PBRS~\citep{ng1999policy} is to steer the learning process towards paths that reach optimal behaviour faster by providing additional reward feedback through potential functions. This is accomplished by running the learning algorithm on $\mathcal{M'}=\{\mathcal{S}, \mathcal{A}, \mathcal{P}, r', \gamma, \rho\}$, a different MDP, called the reshaped MDP, where $r'(s, a, s') = r(s, a, s') + F(s, a, s')$ is the only change over the original MDP.
Let us define a potential function as $\phi: \mathcal{S}\rightarrow\mathbb{R}$. If $F(s, a, s') = \gamma \phi(s') - \phi(s)$, $\phi$ is bounded, and the horizon length, $H$, is infinite, we are guaranteed that the optimal policy in $\mathcal{M'}$ is optimal also in $\mathcal{M}$ (this holds also for $\epsilon$-optimal policies)~\citep{ng1999policy}. $\phi$ can be interpreted as a guide, telling the agent how valuable a certain state is. Furthermore, if we use this reshaping approach, the following relationships between (action-)value functions hold: 
\begin{align}
    \mathcal{V}_{\mathcal{M}'}^{\pi} (s) = \mathcal{V}_{\mathcal{M}}^{\pi}(s) - \phi(s) \label{eq:corollary2_ng}\\
    \mathcal{Q}_{\mathcal{M}'}^{\pi} (s, a) = \mathcal{Q}_{\mathcal{M}}^{\pi}(s, a) - \phi(s) \label{eq:corollary2_ng_actionvalue}
\end{align}
as shown by \textbf{Corollary 2} in~\citep{ng1999policy}.
Therefore, if $\phi = \mathcal{V}_{\mathcal{M}}^{\pi^*}$, then the optimal value function in the reshaped MDP is equivalent to zero.

In practice, RL algorithms cannot solve problems using an infinite horizon (as the previous guarantees require). Hence, we usually have to set a finite limit to $H$, $H<\infty$. This generates a bias, since the reshaped return depends on the final state reached by the agent~\citep{grzes2017reward}:
\begin{equation}
    \mathcal{R}'(\tau) = \mathcal{R}(\tau) + \gamma^H\phi(s_H) -\phi(s_0), \label{eq:bias}
\end{equation}
where $\tau = s_0, a_0, s_1, \dots, a_{H-1}, s_H$ is a trajectory and $\mathcal{R}(\tau) = \sum_{t=0}^{H-1}\gamma^t r(s_t,a_t, s_{t+1})$ the discounted return. Since the last visited state depends on the policy, optimizing $\mathcal{R}'(\tau)$ is not the same as optimizing $\mathcal{R}(\tau)$.

\section{Abstractions for PBRS}\label{sec:abstractions4pbrs}
As expected, the potential function choice has an impact on PBRS efficiency.
If we use the optimal value function as the potential function, it represents an excellent choice, because it makes the optimal value function of the reshaped MDP equivalent to zero (an easy function to be learned). But, it is unrealistic to assume we already have the optimal value function. Hence, it is common to use handcrafted heuristics in an attempt to approximate the optimal value function of the original task~\citep{ng1999policy}. 

Inspired by the use of abstractions for heuristics computation in heuristic search and automated planning~\citep{DBLP:journals/ci/CulbersonS98,haslum2007domain},
an alternative approach consists of defining an MDP, ${\mathcal{M}}_\alpha$, that is an abstraction of the original task~\citep{Cipollone2023} formalized by an aggregation function $\alpha$ whose definition can be found later. ${\mathcal{M}}_\alpha$'s optimal value function can then be used as the potential function for PBRS.
This idea works under the intuition that the abstraction should be a simplified version of the original task, \ie with a much smaller state/action space. Thus, generating the optimal value function of ${\mathcal{M}}_\alpha$ will see a strong reduction in computation. Besides, it provides an approximation of the original task's optimal value function. Furthermore, in many tasks, it is much easier for a human to provide knowledge via the definition of a simpler task rather than through the definition of a specific potential function. Take as an example specifying the potential function for a general ALE task that, in its RAM state representation, has $2^{1024}$ states. Defining a potential function over that many states in a meaningful way is a non-trivial endeavor. However, simplifying the game itself through an abstraction that neglects enemies or other complex dynamics of the game is a much easier task that can greatly reduce the number of states/actions to consider.

To formally define reshaping via abstractions, let $\mathcal{S}_\alpha$ represent the state space of ${\mathcal{M}}_\alpha$, with $|\mathcal{S}_\alpha| \ll |\mathcal{S}|$ and $\alpha: \mathcal{S}\rightarrow\mathcal{S}_\alpha$ be an aggregation function, mapping each state of the original task into a state in $\mathcal{S}_\alpha$. Also, let $\mathcal{V}^*_{{\mathcal{M}}_\alpha}(s_\alpha), s_\alpha\in\mathcal{S}_\alpha$ be the optimal value function of ${\mathcal{M}}_\alpha$. Then, we have the following reshaped reward in $\mathcal{M}'$, using $\mathcal{V}^*_{{\mathcal{M}}_\alpha}(s_\alpha)$ as the potential function ($\phi(s) = \mathcal{V}^*_{{\mathcal{M}}_\alpha}(\alpha(s))$):
\begin{equation}
    r'(s, a, s') = r(s, a, s') + \gamma\mathcal{V}^*_{{\mathcal{M}}_\alpha}(\alpha(s')) - \mathcal{V}^*_{{\mathcal{M}}_\alpha}(\alpha(s)). \label{eq:reshaping}
\end{equation}

\section{The Benefit of PBRS}\label{sec:sample_efficiency}
In the previous section, we have given intuition on why the optimal value function constitutes a good choice for the potential and seen how to get an approximation of it. Let us now go deeper into why such a choice should provide a benefit for our algorithms.
To do this, we begin by analyzing the performance impact of PBRS on Value Iteration (VI), which, to the best of our knowledge, has not been considered by related literature.
\begin{restatable}[VI Performance]{proposition}{VI Performance}\label{prop:VI_performance}
Given an MDP $\mathcal{M}$, an initialization point $\mathcal{V}_0$, a potential function $\phi$ that is an approximation of $\mathcal{V}_{\mathcal{M}}^*$ such that $||\mathcal{V}_{\mathcal{M}}^* - \phi - \mathcal{V}_0||_\infty \le ||\mathcal{V}_{\mathcal{M}}^* - \mathcal{V}_0||_\infty$, where $||\cdot||_\infty$ is the infinity norm, applying VI starting from $\mathcal{V}_0$ in both the original and reshaped MDP will result in the minimum number of iterations to guarantee $\epsilon$-optimality in the reshaped MDP to be lower than in the original MDP.
\end{restatable}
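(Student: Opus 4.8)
The plan is to reduce the statement to a comparison of the standard a priori convergence bound of VI applied to $\mathcal{M}$ and to $\mathcal{M}'$. Two ingredients are needed. First, $\mathcal{M}'$ is itself an ordinary MDP whose reward $r'=r+F$ is bounded (because $r$ is bounded and $\phi$ is bounded, as it is when $\phi=\mathcal{V}^*_{\mathcal{M}_\alpha}\circ\alpha$), so its Bellman optimality operator $T'$ is a $\gamma$-contraction in $||\cdot||_\infty$ and VI started at any $\mathcal{V}_0$ converges to the unique fixed point of $T'$. Second, this fixed point is exactly $\mathcal{V}^*_{\mathcal{M}}-\phi$: this follows either from \textbf{Corollary 2} of~\citep{ng1999policy} (Eq.~\eqref{eq:corollary2_ng} at $\pi=\pi^*$, using that $\pi^*$ is optimal in both MDPs), or by a one-line substitution showing $T'(V-\phi)=(TV)-\phi$ for every $V$, where $T$ is the Bellman optimality operator of $\mathcal{M}$; hence $\mathcal{V}^*_{\mathcal{M}'}=\mathcal{V}^*_{\mathcal{M}}-\phi$.

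Next I would invoke the textbook contraction estimate: after $k$ iterations of VI from $\mathcal{V}_0$ in an MDP with optimal value $\mathcal{V}^*$ one has $||\mathcal{V}_k-\mathcal{V}^*||_\infty\le\gamma^k||\mathcal{V}_0-\mathcal{V}^*||_\infty$, and since this is the sharpest a priori guarantee available, the smallest number of iterations that certifies $\epsilon$-optimality is
\[
k^\star \;=\; \left\lceil \log_{1/\gamma}\!\Big( ||\mathcal{V}_0-\mathcal{V}^*||_\infty/\epsilon \Big)\right\rceil
\]
(and $k^\star=0$ when $||\mathcal{V}_0-\mathcal{V}^*||_\infty\le\epsilon$). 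Applying this to $\mathcal{M}$ and to $\mathcal{M}'$, which share the same $\gamma$ and the same initialization $\mathcal{V}_0$, gives $k^\star_{\mathcal{M}}$ as a function of $||\mathcal{V}_0-\mathcal{V}^*_{\mathcal{M}}||_\infty$ and $k^\star_{\mathcal{M}'}$ as the same function of $||\mathcal{V}_0-\mathcal{V}^*_{\mathcal{M}'}||_\infty=||\mathcal{V}^*_{\mathcal{M}}-\phi-\mathcal{V}_0||_\infty$. Because $x\mapsto\lceil\log_{1/\gamma}(x/\epsilon)\rceil$ is nondecreasing, the hypothesis $||\mathcal{V}^*_{\mathcal{M}}-\phi-\mathcal{V}_0||_\infty\le||\mathcal{V}^*_{\mathcal{M}}-\mathcal{V}_0||_\infty$ yields $k^\star_{\mathcal{M}'}\le k^\star_{\mathcal{M}}$, which is the claim (a strict inequality whenever the two sup-norm errors straddle a jump of the ceiling).

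Finally I would add a remark for the reading in which ``$\epsilon$-optimality'' refers to the sub-optimality of the greedy policy extracted from $\mathcal{V}_k$ rather than to value accuracy: the usual conversion turns a value error $\delta$ into a greedy policy that is $\tfrac{2\gamma}{1-\gamma}\delta$-optimal, and by Eq.~\eqref{eq:corollary2_ng} the $\phi$ terms cancel, so a policy that is $\eta$-optimal in $\mathcal{M}'$ is exactly $\eta$-optimal in $\mathcal{M}$; the conversion factor is thus identical in both MDPs and the same monotone comparison goes through. I do not expect a genuinely hard step here: the only points requiring care are (a) justifying that the a priori contraction bound is the appropriate formalization of ``minimum number of iterations to guarantee $\epsilon$-optimality'', and (b) keeping the boundedness of $\phi$ in force so that $T'$ really is a contraction with fixed point $\mathcal{V}^*_{\mathcal{M}}-\phi$; both are routine.
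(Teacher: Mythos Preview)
Your proposal is correct and follows essentially the same route as the paper's proof: both invoke the $\gamma$-contraction bound for VI in $\mathcal{M}$ and in $\mathcal{M}'$, use Eq.~\eqref{eq:corollary2_ng} to identify $\mathcal{V}^*_{\mathcal{M}'}=\mathcal{V}^*_{\mathcal{M}}-\phi$, invert the contraction inequality to obtain the minimum iteration count, and then compare the two counts via the hypothesis on the sup-norm errors. Your version is somewhat more explicit (the ceiling, the boundedness of $\phi$, the remark on greedy-policy sub-optimality), but the argument is the same.
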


\textit{Proposition \ref{prop:VI_performance}} (proof in Appendix~\ref{appx:vi_proof}) tells us that improvements are achievable in the context of Value Iteration whenever we are able to provide a good estimate of the optimal value function through the potential function using PBRS. However, this grants us just computational savings, and we would like to convert it into a greater sample efficiency gain in the context of a general RL setting.

For what concerns value-based algorithms like Q-Learning~\citep{watkins1992q} and SARSA~\citep{rummery1994line},~\citet{wiewiora2003potential} showed that, under the assumption of learning through the same experience, PBRS is equivalent to initializing the action-value function with the potential function. This can be interpreted as a simplistic form of Transfer Learning~\citep{taylor2009transfer, lazaric2012transfer}. Hence, whenever we initialize the action-value function with a potential function that represents a good starting estimate of the optimal solution, we will observe a greater sample efficiency due to a more favorable exploration-exploitation trade-off. On the other hand, in the context of the Policy Gradient Theorem (PGT)~\citep{sutton1999policy}:
\begin{equation}
        \nabla_{\theta}J_{\theta} = \sum_s d^{\pi_{\theta}}(s) \sum_a \pi_{\theta}(a|s)\nabla_{\theta}\log \pi_{\theta}(a|s)\mathcal{Q}_{\mathcal{M}}^{\pi_{\theta}}(s, a)\label{eq:PGT},
\end{equation}
where $d^{\pi_{\theta}}(s) = \sum_{t=0}^{\infty}\gamma^t Pr\{s_t=s|s_0, \pi_{\theta}\}$,
baselines are commonly used to reduce the variance of the gradient estimator. The baseline version of PGT defines $\nabla_{\theta}J_{\theta}$ as:
    \begin{equation*}
        \sum_s d^{\pi_{\theta}}(s) \sum_a \pi_{\theta}(a|s)\nabla_{\theta}\log \pi_{\theta}(a|s)(\mathcal{Q}_{\mathcal{M}}^{\pi_{\theta}}(s, a) - b(s)),
    \end{equation*}
where $b(s)$ is the baseline. A baseline is a free parameter not affecting the estimate of the gradient, indeed $\sum_a \pi_{\theta}(a|s)\nabla_{\theta}\log \pi_{\theta}(a|s)b(s) = 0$, that can reduce its variance~\citep{deisenroth2013survey}.\footnote{$\sum_a \pi_{\theta}(a|s)\nabla_{\theta}\log \pi_{\theta}(a|s)b(s) = 0$ can be obtained via $\nabla_{\theta} \pi_{\theta}(a|s) = \pi_{\theta}(a|s)\nabla_{\theta}\log \pi_{\theta}(a|s)$ and $\sum_a\nabla_{\theta} \pi_{\theta}(a|s)b(s) = b(s)\nabla_{\theta}\sum_a \pi_{\theta}(a|s) = 0$.}

\begin{restatable}[Baseline Equivalence]{proposition}{Baseline Equivalence}\label{prop:Baseline_Equivalence}
In the context of the PGT, using PBRS (\citeauthor{grzes2017reward}' approach in the finite-horizon case) is equivalent to using a baseline $b(s) = \phi(s)$.
\end{restatable}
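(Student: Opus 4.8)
The plan is to compute the policy gradient of the finite-horizon reshaped objective directly via the likelihood-ratio (REINFORCE) identity and to show that the shaping term contributes exactly a state-dependent $-\phi(s)$ subtraction of the kind a baseline would. First I would write the reshaped objective as $J'_{\theta} = \mathbb{E}_{\tau\sim\pi_{\theta}}[\mathcal{R}'(\tau)]$ with $\mathcal{R}'(\tau)=\sum_{t=0}^{H-1}\gamma^t r'(s_t,a_t,s_{t+1})$ and $r'(s,a,s')=r(s,a,s')+\gamma\phi(s')-\phi(s)$, and apply $\nabla_{\theta}J'_{\theta} = \mathbb{E}_{\tau}[\mathcal{R}'(\tau)\sum_{t=0}^{H-1}\nabla_{\theta}\log\pi_{\theta}(a_t|s_t)]$. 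Using the usual causality argument (future actions do not influence past rewards), this rearranges into $\mathbb{E}_{\tau}\!\big[\sum_{t=0}^{H-1}\gamma^t\nabla_{\theta}\log\pi_{\theta}(a_t|s_t)\sum_{k=t}^{H-1}\gamma^{k-t} r'(s_k,a_k,s_{k+1})\big]$.

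The second key step is to split the inner return-to-go. The shaping contribution $\sum_{k=t}^{H-1}\gamma^{k-t}(\gamma\phi(s_{k+1})-\phi(s_k))$ telescopes to $\gamma^{H-t}\phi(s_H)-\phi(s_t)$, so the reshaped return-to-go equals the original return-to-go $G_t := \sum_{k=t}^{H-1}\gamma^{k-t}r(s_k,a_k,s_{k+1})$ plus $\gamma^{H-t}\phi(s_H)-\phi(s_t)$. Here I would invoke \citeauthor{grzes2017reward}'s prescription for the finite-horizon case, namely that the potential vanishes at the terminal state reached at the horizon (equivalently, the boundary term $\gamma^{H}\phi(s_H)$ is dropped from $\mathcal{R}'$). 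That eliminates $\gamma^{H-t}\phi(s_H)$ and leaves precisely $G_t-\phi(s_t)$ inside the expectation.

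Finally I would re-fold the trajectory expectation back into the stationary form of the PGT used in \eqref{eq:PGT}: taking the conditional expectation over the continuation given $(s_t,a_t)$ turns $\mathbb{E}[G_t\mid s_t,a_t]$ into $\mathcal{Q}_{\mathcal{M}}^{\pi_{\theta}}(s_t,a_t)$, while $\sum_{t}\gamma^t\Pr\{s_t=s\}=d^{\pi_{\theta}}(s)$; the leftover $-\phi(s_t)$ is constant in the action, so the whole expression collapses to $\sum_s d^{\pi_{\theta}}(s)\sum_a\pi_{\theta}(a|s)\nabla_{\theta}\log\pi_{\theta}(a|s)\big(\mathcal{Q}_{\mathcal{M}}^{\pi_{\theta}}(s,a)-\phi(s)\big)$, which is exactly the baseline-augmented PGT with $b(s)=\phi(s)$. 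As a consistency check one notes that $\phi$ is a function of state alone, hence a legitimate baseline, and that the footnote identity $\sum_a\pi_{\theta}(a|s)\nabla_{\theta}\log\pi_{\theta}(a|s)b(s)=0$ confirms it does not perturb the gradient — matching the fact that, once the boundary term is removed, PBRS leaves the policy gradient unchanged.

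The main obstacle I expect is the finite-horizon bookkeeping: keeping the $\gamma^t$ factors straight so that the discounted occupancy $d^{\pi_{\theta}}$ emerges correctly, handling the non-stationarity of the finite-horizon action-value function (which the stationary PGT notation glosses over), and stating precisely what ``\citeauthor{grzes2017reward}'s approach'' does at the horizon so that the cancellation of $\gamma^{H-t}\phi(s_H)$ is rigorous rather than heuristic. Everything else is the standard telescoping-of-the-potential computation together with the likelihood-ratio manipulation.
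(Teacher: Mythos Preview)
Your argument is correct, but it takes a different route from the paper. The paper's proof is much shorter: it applies the PGT \eqref{eq:PGT} directly to the reshaped MDP $\mathcal{M}'$, obtaining $\nabla_\theta J_\theta$ in terms of $\mathcal{Q}_{\mathcal{M}'}^{\pi_\theta}(s,a)$, and then simply invokes the action-value relationship \eqref{eq:corollary2_ng_actionvalue}, $\mathcal{Q}_{\mathcal{M}'}^{\pi_\theta}(s,a)=\mathcal{Q}_{\mathcal{M}}^{\pi_\theta}(s,a)-\phi(s)$, to read off the baseline form; the finite-horizon case is dispatched in one sentence by noting that \citeauthor{grzes2017reward}'s $\phi(s_H)=0$ makes the Monte-Carlo estimate of $\mathcal{Q}_{\mathcal{M}'}^{\pi_\theta}$ decompose the same way. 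You instead work at the trajectory level: you expand the reshaped objective via the likelihood-ratio identity, telescope the shaping term in the return-to-go explicitly to $\gamma^{H-t}\phi(s_H)-\phi(s_t)$, kill the boundary term with \citeauthor{grzes2017reward}'s prescription, and only then re-fold into the occupancy form. Your approach is more self-contained (it does not presuppose \eqref{eq:corollary2_ng_actionvalue} and makes the role of the terminal potential fully explicit), at the cost of the extra bookkeeping you flag; the paper's approach is terse because all of that telescoping is already packaged inside \eqref{eq:corollary2_ng_actionvalue}.
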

\begin{proof}
    Applying the PGT to the reshaped MDP $\mathcal{M}'$ via Eq.~\eqref{eq:PGT}:
    \begin{equation*}
            \nabla_{\theta}J_{\theta} = \sum_s d^{\pi_{\theta}}(s) \sum_a \pi_{\theta}(a|s)\nabla_{\theta}\log \pi_{\theta}(a|s)\mathcal{Q}_{\mathcal{M}'}^{\pi_{\theta}}(s, a).
    \end{equation*}
    Now, using Eq.~\eqref{eq:corollary2_ng_actionvalue}, we get $\nabla_{\theta}J_{\theta}$ equivalent to: 
    \begin{equation*}
        \sum_s d^{\pi_{\theta}}(s) \sum_a \pi_{\theta}(a|s)\nabla_{\theta}\log \pi_{\theta}(a|s)(\mathcal{Q}_{\mathcal{M}}^{\pi_{\theta}}(s, a) - \phi(s)).
    \end{equation*}
    If we combine the above expression with $b(s) = \phi(s)$, it yields the baseline version of the PGT. In the case of finite horizons, we are forced to use a non-stationary potential~\citep{devlin2012dynamic} as proposed by~\citet{grzes2017reward} (which is unbiased), so that $\phi(s_H) = 0$ and the Monte-Carlo estimate of $\mathcal{Q}_{\mathcal{M}'}^{\pi_{\theta}}(s, a)$ can be decomposed as above.\footnote{$\phi(s_H) = 0$ only for the states encountered at the end of a trajectory. If the same state is encountered before, its original potential is used.}
\end{proof}

It is widely known that using value function estimates as baselines allows Policy Gradient algorithms to suffer less variance in their gradients~\citep{schulman2015high}. This, in turn, implies a greater sample efficiency because of the more precise direction estimate toward which the algorithm will move during its optimization process. Since \textit{Proposition~\ref{prop:Baseline_Equivalence}} shows the equivalence between baselines and PBRS, we can deduce that a good value function estimate provided by the potential will help us improve sample efficiency, achieving better performance. The main advantage of using PBRS is that it is not mandatory to estimate the value function from data collected by the agent. Instead, the estimate may come from a simplified representation of the problem the agent is tackling. This enables a different way to provide knowledge to the agent while learning and does not require the representation to be perfect, relieving the user of the burden of providing accurate models.

\section{Bias Analysis}\label{sec:bias_analysis}

Now, we focus on the bias induced by PBRS in the episodic setting, \ie whenever we set a finite horizon due to computational limitations. We will start by considering goal-oriented MDPs, and then we will focus on general MDPs to understand the conditions that preserve policy ordering between the original problem and the reshaped one.

\subsection{Goal-Oriented MDPs}
An important family of goal-oriented MDPs is represented by those having a deterministic transition function, hence, we start our analysis from them.

\begin{restatable}[Policy Ordering in Deterministic Goal-Oriented Episodic MDPs]{thrm}{Policy Ordering in Deterministic Goal-Oriented Episodic MDPs}\label{thm:ordering_DGOMDPs_expct}
In the episodic context, let the horizon be $H$, $\mathcal{M}$ be a  deterministic goal-oriented MDP, and $\mathcal{M}'$ be its reshaped version obtained using a bounded potential $\phi: \mathcal{S} \rightarrow [0, \Phi]$. Moreover, let $\phi(s_G)=\Phi~\forall~s_G~\in~\mathcal{S}_{G}$. Then, the policy ordering in $\mathcal{M}$ is preserved in the reshaped MDP $\mathcal{M}'$ for all the deterministic policies reaching the goal in at most $H$ steps. 
\end{restatable}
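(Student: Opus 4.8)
The plan is to show that, when restricted to the deterministic policies that reach $\mathcal{S}_G$ within $H$ steps, the reshaped performance $J_{\mathcal{M}'}^{\pi} := \E_{s_0\sim\rho}[\mathcal{V}_{\mathcal{M}'}^{\pi}(s_0)]$ is obtained from the original performance $J_{\mathcal{M}}^{\pi} := \E_{s_0\sim\rho}[\mathcal{V}_{\mathcal{M}}^{\pi}(s_0)]$ by adding a quantity that does not depend on $\pi$. Adding a policy-independent constant is a strictly monotone transformation of the performance values, so the ordering of policies is preserved.

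First I would fix an initial state $s_0 \notin \mathcal{S}_G$ and an admissible deterministic policy $\pi$. Since $\mathcal{M}$ and $\pi$ are both deterministic, $\pi$ induces a unique trajectory $\tau = s_0, a_0, s_1, \dots, s_H$; let $k = k(\pi, s_0) \le H$ be the first index with $s_k \in \mathcal{S}_G$, after which the episode is absorbed in the goal ($s_k = s_{k+1} = \dots = s_H \in \mathcal{S}_G$) with zero reward. By the goal-oriented reward definition the only non-zero reward along $\tau$ is the unit reward on the transition $s_{k-1}\to s_k$, so $\mathcal{V}_{\mathcal{M}}^{\pi}(s_0) = \mathcal{R}(\tau) = \gamma^{k-1}$. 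For the reshaped MDP the shaping term telescopes exactly as in Eq.~\eqref{eq:bias}, giving $\mathcal{V}_{\mathcal{M}'}^{\pi}(s_0) = \mathcal{R}'(\tau) = \mathcal{R}(\tau) + \gamma^H\phi(s_H) - \phi(s_0)$. Here the hypotheses do the work: because $\pi$ reaches the goal within $H$ steps, $s_H \in \mathcal{S}_G$, and by assumption $\phi(s_H) = \Phi$. Hence $\mathcal{V}_{\mathcal{M}'}^{\pi}(s_0) = \mathcal{V}_{\mathcal{M}}^{\pi}(s_0) + \gamma^H\Phi - \phi(s_0)$.

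The correction $\gamma^H\Phi - \phi(s_0)$ involves only $s_0$ and the fixed constants $\gamma, H, \Phi$, not $\pi$. Taking the expectation over $\rho$ then yields $J_{\mathcal{M}'}^{\pi} = J_{\mathcal{M}}^{\pi} + \gamma^H\Phi - \E_{s_0\sim\rho}[\phi(s_0)]$ for every admissible $\pi$, with the additive term identical across all of them. Consequently, for any two admissible deterministic policies $\pi_1$ and $\pi_2$, $J_{\mathcal{M}}^{\pi_1} \ge J_{\mathcal{M}}^{\pi_2}$ if and only if $J_{\mathcal{M}'}^{\pi_1} \ge J_{\mathcal{M}'}^{\pi_2}$, which is exactly the claimed preservation of policy ordering on this class.

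I expect the main obstacle to be conceptual rather than computational: making precise why the restriction to policies that reach the goal within $H$ steps cannot be dropped. For such a policy, $s_H$ is a goal state and $\phi(s_H)$ is pinned to the common value $\Phi$; for a policy that fails to reach the goal, $s_H$ is an arbitrary non-goal state whose potential ranges over $[0,\Phi)$ and varies with $\pi$, so the correction term ceases to be policy-independent and the reduction to an order-preserving transformation fails. A secondary point to state carefully is the episodic bookkeeping at the goal: whether one pads the trajectory to length $H$ (correction $\gamma^H\Phi - \phi(s_0)$, as above) or terminates the episode at step $k$ (correction $\gamma^k\Phi - \phi(s_0)$, so that $J_{\mathcal{M}'}^{\pi} = (1+\gamma\Phi)\,J_{\mathcal{M}}^{\pi} - \E_{s_0\sim\rho}[\phi(s_0)]$); in either convention the map $J_{\mathcal{M}}^{\pi}\mapsto J_{\mathcal{M}'}^{\pi}$ is affine with positive slope, so the ordering conclusion is unchanged.
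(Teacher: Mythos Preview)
Your argument is correct and in fact cleaner than the paper's. You observe that on the class of deterministic goal-reaching policies the map $J_{\mathcal{M}}^{\pi}\mapsto J_{\mathcal{M}'}^{\pi}$ is affine with positive slope (a pure shift $\gamma^H\Phi-\E_\rho[\phi(s_0)]$ under padding, or slope $1+\gamma\Phi$ under early termination), and order preservation follows in one line. The paper instead argues by iterated pairwise comparison: it writes $\mathcal{R}'(\tau)=\mathcal{R}(\tau)+\gamma^{h(\tau)}\Phi-\phi(s_0)$ with $h(\tau)$ the hitting time, compares $\pi^*$ against $\pi^{**}$, splits the inequality into the $\mathcal{R}(\tau)$ piece and the $\gamma^{h(\tau)}\Phi$ piece, and reduces the latter to the former using the goal-oriented identity $\gamma^{h(\tau)-1}=\mathcal{R}(\tau)$; it then repeats down the chain. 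Your affine-transformation viewpoint subsumes this chain in a single stroke and also makes transparent why both episodic conventions give the same conclusion.

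One thing the paper's proof does that yours does not: it goes slightly beyond the stated claim and shows, as a final step, that every policy reaching the goal in at most $H$ steps also dominates in $\mathcal{M}'$ every policy that fails to reach the goal within $H$ steps (using $\Phi\ge\phi(s_H)$ and $\mathcal{R}(\tau)=0$ for the latter). Your discussion of why the restriction cannot be dropped identifies exactly the obstruction but stops short of this extra dominance statement; if you want to match the paper's scope you can add it in two lines, since $\mathcal{R}'(\tau^H)=\gamma^{H-1}+\gamma^H\Phi-\phi(s_0)>\gamma^H\phi(s_H)-\phi(s_0)=\mathcal{R}'(\tau^{H+k})$ is immediate from $\gamma^{H-1}>0$ and $\Phi\ge\phi(s_H)$.
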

\begin{proof}
Let $\pi^*$ be the optimal policy in the original MDP (the policy that requires the least number of steps to reach the goal), and $p_{\pi}(\tau) = \rho(s_0)\pi(a_0| s_0)\Pi_{t=1}^{H-1}\mathcal{P}(s_{t+1}|s_t, a_t)\pi(a_t|s_t)~\forall \pi$ be the density of the distribution over the set of all trajectories induced by the policy and the transition dynamics (notice that this is a $\delta$ over one single trajectory because the MDP and the policy are deterministic in the analysed case). Then, to show that the policy ordering is preserved, we want to prove:
    \begin{align}
        &\mathbb{E}_{\tau \sim p_{\pi^*}}\left[\mathcal{R}'(\tau)\right] > \mathbb{E}_{\tau \sim p_{\pi}}\left[\mathcal{R}'(\tau)\right] ~\forall \pi \neq \pi^* \\
        &\mathbb{E}_{\tau \sim p_{\pi^*}}\left[\mathcal{R}(\tau) + \gamma^{h(\tau)}\phi(s_G) - \cancel{\phi(s_0)}\right] >\nonumber \\ 
        &\qquad\mathbb{E}_{\tau \sim p_{\pi}}\left[\mathcal{R}(\tau) + \gamma^{h(\tau)}\phi(s_h^{\pi}) - \cancel{\phi(s_0)}\right] ~\forall \pi \neq \pi^*, \label{eq:reshaped_inequality_expct}
    \end{align}
    where we have used $\mathcal{R}'(\tau) = \mathcal{R}(\tau) + \gamma^H\phi(s_H) -\phi(s_0)$~\citep{grzes2017reward} in \eqref{eq:reshaped_inequality_expct}, and $h(\tau)\leq H$ represents the number of steps from the initial state to the goal in $\tau$ that depends on the policy and environment dynamics. Assuming $\phi(s_G)=\Phi~\forall~s_G~\in~\mathcal{S}_{G}$, then choosing $\pi = \pi^{**}$ (the second-best policy \wrt $\mathcal{M}$) will maximize the right-hand side of the inequality and imply $s_h^{\pi^{**}}=s'_G$, so we can substitute $\phi(s_h^{\pi})=\phi(s_h^{\pi^{**}})=\Phi$:
    \begin{align}
        \mathbb{E}_{\tau \sim p_{\pi^*}}&\left[\mathcal{R}(\tau) + \gamma^{h(\tau)}\Phi \right] > 
        \mathbb{E}_{\tau \sim p_{\pi^{**}}}\left[\mathcal{R}(\tau) + \gamma^{h(\tau)}\Phi \right].  \label{eq:using_goal_oriented_hp_expct}
    \end{align}
    Now, \eqref{eq:using_goal_oriented_hp_expct} is always true because:
    \begin{equation*}
        \mathbb{E}_{\tau \sim p_{\pi^*}}\left[\mathcal{R}(\tau)\right] > \mathbb{E}_{\tau \sim p_{\pi^{**}}}\left[\mathcal{R}(\tau)\right]
    \end{equation*}
    and
    \begin{align}
        &\mathbb{E}_{\tau \sim p_{\pi^{*}}}\left[\gamma\gamma^{h(\tau)-1}\Phi\right] > \mathbb{E}_{\tau \sim p_{\pi^{**}}}\left[\gamma\gamma^{h(\tau)-1}\Phi\right]\nonumber \\
        &\mathbb{E}_{\tau \sim p_{\pi^{*}}}\left[\sum_{t=0}^{H-1}\gamma^t r(s_t,a_t, s_{t+1})\right]>\nonumber\\
        &\qquad\mathbb{E}_{\tau \sim p_{\pi^{**}}}\left[\sum_{t=0}^{H-1}\gamma^t r(s_t,a_t, s_{t+1})\right]\label{eq:using_goal_reward_is_one} \\
        &\mathbb{E}_{\tau \sim p_{\pi^{*}}}\left[\mathcal{R}(\tau)\right] > \mathbb{E}_{\tau \sim p_{\pi^{**}}}\left[\mathcal{R}(\tau)\right],\nonumber
    \end{align}
    where in~\eqref{eq:using_goal_reward_is_one} the reward is one in any goal as per the definition of goal-oriented MDP in Section~\ref{sec:background}.
    
    Furthermore, iterating this process for any policy starting from the best and the second best up to the ones requiring $H-1$ and $H$ steps, we see that the ordering is preserved in the reshaped MDP. For what concerns all the other policies requiring more than $H$ steps to achieve the goal, let us restart from~\eqref{eq:reshaped_inequality_expct} selecting $\pi^H$ instead of $\pi^*$, and $\pi^{H+k}$ for any $k$ instead of $\pi$:
    \begin{align*}
        \mathbb{E}_{\tau \sim p_{\pi^H}}&\left[\mathcal{R}(\tau) + \gamma^H\Phi \right] > \mathbb{E}_{\tau \sim p_{\pi^{H+k}}}\left[\mathcal{R}(\tau) + \gamma^H\phi(s_H) \right].
    \end{align*}
    This inequality is true because $\mathbb{E}_{\tau \sim p_{\pi^H}}\left[\mathcal{R}(\tau)\right]>\mathbb{E}_{\tau \sim p_{\pi^{H+k}}}\left[\mathcal{R}(\tau)\right]=0$ and $\Phi\geq\phi(s_H)$.
\end{proof}

In a deterministic goal-oriented MDP, whenever we fix a horizon length for computational reasons, we are setting a limit on the sensitivity of our agent. Therefore, policies reaching the goal in $H+1$ steps cannot be distinguished from those that reach the goal in more than $H+1$ steps. Bearing this in mind, Theorem~\ref{thm:ordering_DGOMDPs_expct} tells us that, as long as the goal can be reached in at most $H$ steps, the solution to the reshaped problem is equivalent to that of the original problem. 

Even though the family of MDPs considered by Theorem~\ref{thm:ordering_DGOMDPs_expct} has great relevance in practice, it is not able to model problems where there is any sort of environmental stochastic dynamic involved. For this reason, we proceed with our analysis in the context of general goal-oriented MDPs as well. We start by defining the probability of not having reached the goal in $H$ steps for an agent following policy $\pi$:
\begin{equation}
    \Pr_{\tau\sim p_{\pi}}\{s_H\notin S_G\}=\mathbb{E}_{\tau\sim p_{\pi}}\left[\mathbb{I}_{\{s_H\notin S_G\}}\right]\label{eq:prob_game_still_on},
\end{equation}
where $p_{\pi}(\tau) = \rho(s_0)\pi(a_0| s_0)\Pi_{t=1}^{H-1}\mathcal{P}(s_{t+1}|s_t, a_t)\pi(a_t|s_t)$ is the density of the distribution over trajectories induced by the policy and the transition dynamics.
For convenience in notation, we define the event $E = \{s_H\notin S_G\}$. Furthermore, we assume the following:
\begin{restatable}[Minimal Performance Gap]{assumption}{Minimal Performance Gap}\label{ass:min_perf_gap}
Given $\pi_A \in \Pi_H$, where $\Pi_H$ is the set of all policies having a non-zero expected return within $H$ steps in $\mathcal{M}$, and $\pi_B$ a general policy in $\mathcal{M}$, the following performance gap is at least $\epsilon>0$:
    \begin{equation*}
        \inf_{(\pi_A,\pi_B) \in \mathcal{Z}} \mathbb{E}_{\tau\sim p_{\pi_A}}\left[\mathcal{R}(\tau)\right]-\mathbb{E}_{\tau\sim p_{\pi_B}}\left[\mathcal{R}(\tau)\right]>\epsilon,
    \end{equation*}
where $\mathcal{Z} =\{\pi_A,\pi_B:\pi_A\in\Pi_H \wedge\mathbb{E}_{\tau\sim p_{\pi_A}}\left[\mathcal{R}(\tau)\right]>\mathbb{E}_{\tau\sim p_{\pi_B}}\left[\mathcal{R}(\tau)\right]\}$.
\end{restatable}

Now we are ready to state the existence of a potential function that is order preserving for all the policies in the relevant set $\Pi_H$ through the following theorem.

\begin{restatable}[Policy Ordering in Goal-Oriented Episodic MDPs]{thrm}{Policy Ordering in Goal-Oriented Episodic MDPs}\label{thm:ordering_GOMDPs_expct}
In the episodic context, let the horizon be $H$, $\mathcal{M}$ be a goal-oriented MDP, $\mathcal{M}'$ be its reshaped version obtained using a bounded potential $\phi: \mathcal{S} \rightarrow [0, \Phi]$. Moreover, let $\phi(s_G)=\Phi\geq C,  ~\forall~s_G~\in~\mathcal{S}_{G}$ with $C$ being:
\begin{align*}
    \ssup\limits_{(\pi_A,\pi_B) \in \mathcal{Z}}\frac{\gamma^{H-1}\Big(\E\limits_{\tau\sim p_{\pi_B}}\left[\phi(s_H)\mathbb{I}_E\right] - \E\limits_{\tau\sim p_{\pi_A}}\left[\phi(s_H)\mathbb{I}_E\right]\Big)}{\mathbb{E}_{\tau\sim p_{\pi_A}}\left[\mathcal{R}(\tau)\right]-\mathbb{E}_{\tau\sim p_{\pi_B}}\left[\mathcal{R}(\tau)\right]},
\end{align*}
where $\mathcal{Z}$ is defined as in Assumption~\ref{ass:min_perf_gap}.
Then, the policy ordering in $\mathcal{M}$ is preserved in $\mathcal{M}'$ for all the policies in $\Pi_H$.
\end{restatable}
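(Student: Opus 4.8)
The plan is, for an arbitrary pair $(\pi_A,\pi_B)\in\mathcal{Z}$ (so $\pi_A\in\Pi_H$ and $\mathbb{E}_{\tau\sim p_{\pi_A}}[\mathcal{R}(\tau)]>\mathbb{E}_{\tau\sim p_{\pi_B}}[\mathcal{R}(\tau)]$), to establish $\mathbb{E}_{\tau\sim p_{\pi_A}}[\mathcal{R}'(\tau)]>\mathbb{E}_{\tau\sim p_{\pi_B}}[\mathcal{R}'(\tau)]$; since $\mathcal{Z}$ collects exactly the pairs in which a policy of $\Pi_H$ strictly dominates some other policy --- in particular every strict dominance between two policies of $\Pi_H$ --- this is precisely the preservation of the ordering asserted. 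Mirroring the proof of Theorem~\ref{thm:ordering_DGOMDPs_expct}, I would begin from the bias identity of Eq.~\eqref{eq:bias} in its episodic form $\mathcal{R}'(\tau)=\mathcal{R}(\tau)+\gamma^{|\tau|}\phi(s_{|\tau|})-\phi(s_0)$ and split trajectories on the event $E=\{s_H\notin\mathcal{S}_G\}$. On $E^{c}$ the episode terminates at the first hitting time $h(\tau)\le H$ of the goal, so $s_{|\tau|}\in\mathcal{S}_G$ and $\phi(s_{|\tau|})=\Phi$; moreover, starting outside $\mathcal{S}_G$, the single unit reward is collected on the transition entering $\mathcal{S}_G$, i.e. $\mathcal{R}(\tau)=\gamma^{h(\tau)-1}$. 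On $E$ one has $|\tau|=H$ and $\mathcal{R}(\tau)=0$.

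The key observation is that on $E^{c}$ the tail-potential discount is exactly $\gamma$ times the leading discount of the return, so that $\gamma^{|\tau|}\phi(s_{|\tau|})\mathbb{I}_{E^{c}}=\gamma\Phi\,\mathcal{R}(\tau)$. Taking expectations under $p_\pi$ and discarding the term $\mathbb{E}_{s_0\sim\rho}[\phi(s_0)]$, which does not depend on $\pi$ and hence cancels when the two policies are compared, I obtain
\begin{align*}
  \mathbb{E}_{\tau\sim p_{\pi}}[\mathcal{R}'(\tau)]={}&(1+\gamma\Phi)\,\mathbb{E}_{\tau\sim p_{\pi}}[\mathcal{R}(\tau)]\\
  &+\gamma^{H}\,\mathbb{E}_{\tau\sim p_{\pi}}[\phi(s_H)\mathbb{I}_{E}]
\end{align*}
up to that common constant. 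Setting $\Delta_{\mathcal{R}}=\mathbb{E}_{\tau\sim p_{\pi_A}}[\mathcal{R}(\tau)]-\mathbb{E}_{\tau\sim p_{\pi_B}}[\mathcal{R}(\tau)]$ and $\Delta_{\phi}=\mathbb{E}_{\tau\sim p_{\pi_B}}[\phi(s_H)\mathbb{I}_{E}]-\mathbb{E}_{\tau\sim p_{\pi_A}}[\phi(s_H)\mathbb{I}_{E}]$, the target inequality reduces to $(1+\gamma\Phi)\Delta_{\mathcal{R}}>\gamma^{H}\Delta_{\phi}$, with $\Delta_{\mathcal{R}}>\epsilon>0$ by Assumption~\ref{ass:min_perf_gap}. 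Since the pair $(\pi_A,\pi_B)$ lies in $\mathcal{Z}$, the quotient $\gamma^{H-1}\Delta_{\phi}/\Delta_{\mathcal{R}}$ is one of the terms over which $C$ is a supremum, hence $\gamma^{H-1}\Delta_{\phi}/\Delta_{\mathcal{R}}\le C\le\Phi$; multiplying by $\gamma\Delta_{\mathcal{R}}>0$ gives $\gamma^{H}\Delta_{\phi}\le\gamma\Phi\,\Delta_{\mathcal{R}}<(1+\gamma\Phi)\Delta_{\mathcal{R}}$, which is exactly the strict inequality sought. Assumption~\ref{ass:min_perf_gap} also guarantees $C<\infty$ --- the denominators are bounded below by $\epsilon$ and the numerators by $\gamma^{H-1}\Phi$ --- so the hypothesis $\Phi\ge C$ is non-vacuous.

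The step I expect to be the main obstacle is the identity $\gamma^{|\tau|}\phi(s_{|\tau|})\mathbb{I}_{E^{c}}=\gamma\Phi\,\mathcal{R}(\tau)$: it uses crucially that $\mathcal{M}$ is goal-oriented (a successful episode terminates exactly at a goal state, pinning the tail potential to $\Phi$) and episodic (an unsuccessful trajectory contributes zero return), and it is what collapses the entire goal-reaching part of the reshaped return into the clean factor $1+\gamma\Phi$ while isolating $\gamma^{H}\mathbb{E}[\phi(s_H)\mathbb{I}_{E}]$ --- precisely the quantity $C$ is built to dominate --- as the only residual term. Everything after that (the reduction, the supremum bound, and the finiteness of $C$) is bookkeeping.
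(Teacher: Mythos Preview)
Your proof is correct and follows essentially the same route as the paper's: both split trajectories on the event $E$, use the goal-oriented identity $\gamma^{h(\tau)}\Phi\,\mathbb{I}_{E^c}=\gamma\Phi\,\mathcal{R}(\tau)$ to collapse the goal-reaching part, and then invoke $\Phi\ge C$ to control the residual $\gamma^{H}\Delta_\phi$ term. Your write-up is marginally tidier --- you work directly with a generic pair $(\pi_A,\pi_B)\in\mathcal{Z}$ rather than iterating from $\pi^*$ downward as the paper does, and by carrying the full factor $(1+\gamma\Phi)\Delta_{\mathcal{R}}$ you obtain strict inequality even when $\Phi=C$ attains the supremum --- but the substance is identical.
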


As can be seen from the above theorem (proof in Appendix~\ref{appx:proof_thm_GOMDPs}), in the context of general goal-oriented MDPs, we are not as lucky as with the deterministic case. Nevertheless, we are able to state a condition over the potential's maximum value, $\Phi$, that allows preserving the order for the set of relevant policies $\Pi_H$. This condition involves: the performance gap at the denominator; and two expected values depending on $\Pr\limits_{\tau\sim p_{\pi_B}}E$ and $\Pr\limits_{\tau\sim p_{\pi_A}}E$, which are the first and second terms at the numerator, respectively. Under Assumption~\ref{ass:min_perf_gap}, $C$ is bounded since the smallest denominator we can have is $\epsilon$ and the numerator is the difference between the expected values of a bounded function under two different distributions. Observe that assumptions on the performance gap between policies are common in the performance analysis of RL algorithms~\citep{auer2008near} and that, letting $H$ tend to infinity, $C$ goes to zero recovering the result of \citet{ng1999policy} since $\Phi$ is unconstrained.

The above theorems, requiring the potential function to be maximal in the goal states, avoid the drawback of~\citet{grzes2017reward}\footnote{Notice that, after reaching the goal, the potential can be set to zero to avoid any sort of convergence issue for the returns. Equivalently, in practice, the simulation ends there.}. Indeed, his approach, choosing $\phi(s_H)=0$ in Eq.~\eqref{eq:bias}, leaves all trajectories not ending up in a goal without feedback on how far they terminated \wrt to it, impairing the exploration advantages of PBRS~\citep{Cipollone2023}. Finally, the above theorems have significant practical implications. Theorem~\ref{thm:ordering_DGOMDPs_expct} allows PBRS to be used without concern for the bias due to Eq.~\eqref{eq:bias} in deterministic goal-oriented MDPs. Theorem~\ref{thm:ordering_GOMDPs_expct}, on the other hand, states the existence of a potential function that does not introduce bias in the context of stochastic goal-oriented MDPs, and, even though $C$ in practice cannot be known, in Section~\ref{sec:grid_world}, we will see how even a small value will result in optimal performance for our agent. Together these two theorems may avoid computationally expensive approaches like the off-policy scheme proposed by~\citet{Cipollone2023}. Indeed, their primary focus is on goal-oriented MDPs and they require updating two different agents (effectively doubling computational complexity) to avoid a bias that is not actually there.

\subsection{General MDPs}
In the context of general MDPs, we can establish a condition that preserves policy ordering across the two problems.

\begin{restatable}[Policy Ordering in Reshaped Episodic MDPs]{thrm}{Policy Ordering in Reshaped Episodic MDPs}\label{thm:ordering_REMDPs_expct}
Let $\mathcal{M}$ be an MDP and $\mathcal{M}'$ be its reshaped version obtained using a bounded potential $\phi: \mathcal{S} \rightarrow [0, \Phi]$. Then, there exists a horizon
\begin{equation*}
    H > \log_{\gamma} \inf_{\substack{\pi_A, \pi_B: \\ \mathbb{E}_{\rho}\left[V^{\pi_A}(s)\right] > \mathbb{E}_{\rho}\left[V^{\pi_B}(s)\right]}}\frac{ \mathbb{E}_{\rho}\left[V^{\pi_A}(s)\right] - \mathbb{E}_{\rho}\left[V^{\pi_B}(s)\right] }{\Phi + \frac{\Bar{R}}{1-\gamma}}
\end{equation*}
such that the ordering in the reshaped MDP is preserved \wrt the infinite-horizon version of $\mathcal{M}$.
\end{restatable}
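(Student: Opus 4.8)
The plan is to compare, for every policy $\pi$, its expected \emph{finite}-horizon reshaped return $J'_H(\pi):=\mathbb{E}_{\tau\sim p_{\pi}}\!\left[\mathcal{R}'(\tau)\right]$ (the quantity ordering policies in $\mathcal{M}'$ run with horizon $H$) against its \emph{infinite}-horizon value $\mathbb{E}_{\rho}\!\left[V^{\pi}(s)\right]$, and to show that once $H$ is large enough the two induce the same strict ordering. Starting from Eq.~\eqref{eq:bias} and using linearity of expectation, $J'_H(\pi)=J_H(\pi)+\gamma^{H}\,\mathbb{E}_{\tau\sim p_{\pi}}\!\left[\phi(s_H)\right]-\mathbb{E}_{s_0\sim\rho}\!\left[\phi(s_0)\right]$, where $J_H(\pi):=\mathbb{E}_{\tau\sim p_{\pi}}\!\left[\mathcal{R}(\tau)\right]$ is the truncated un-reshaped return. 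The term $\mathbb{E}_{s_0\sim\rho}\!\left[\phi(s_0)\right]$ is the same for all $\pi$, hence irrelevant to the ordering and dropped henceforth.

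The next step is to control the two remaining $H$-dependent pieces by elementary bounds. Since $r\in[0,\bar R]$, the discounted tail beyond step $H$ gives $0\le \mathbb{E}_{\rho}\!\left[V^{\pi}(s)\right]-J_H(\pi)\le \sum_{t=H}^{\infty}\gamma^{t}\bar R=\gamma^{H}\bar R/(1-\gamma)$; and since $\phi$ takes values in $[0,\Phi]$, $0\le \gamma^{H}\mathbb{E}_{\tau\sim p_{\pi}}\!\left[\phi(s_H)\right]\le \gamma^{H}\Phi$. Fixing an arbitrary ordered pair $(\pi_A,\pi_B)$ with $\mathbb{E}_{\rho}\!\left[V^{\pi_A}(s)\right]>\mathbb{E}_{\rho}\!\left[V^{\pi_B}(s)\right]$, I would bound $J'_H(\pi_A)-J'_H(\pi_B)$ from below by using the lower bounds on the $\pi_A$-terms and the upper bounds on the $\pi_B$-terms, obtaining
\[
J'_H(\pi_A)-J'_H(\pi_B)\ \ge\ \bigl(\mathbb{E}_{\rho}\!\left[V^{\pi_A}(s)\right]-\mathbb{E}_{\rho}\!\left[V^{\pi_B}(s)\right]\bigr)-\gamma^{H}\Bigl(\Phi+\tfrac{\bar R}{1-\gamma}\Bigr).
\]
The right-hand side is strictly positive exactly when $\gamma^{H}<\bigl(\mathbb{E}_{\rho}[V^{\pi_A}(s)]-\mathbb{E}_{\rho}[V^{\pi_B}(s)]\bigr)\big/\bigl(\Phi+\bar R/(1-\gamma)\bigr)$.

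Finally, to enforce preservation simultaneously for all ordered pairs, I would require $\gamma^{H}$ to lie below the infimum of that ratio over $\{(\pi_A,\pi_B):\mathbb{E}_{\rho}[V^{\pi_A}(s)]>\mathbb{E}_{\rho}[V^{\pi_B}(s)]\}$; since $\gamma\in(0,1)$, the map $t\mapsto\log_{\gamma}t$ is strictly decreasing, so this is equivalent to the lower bound on $H$ displayed in the statement, and any such $H$ works. \textbf{Main obstacle:} turning the per-pair condition into a single finite threshold requires that this infimum be strictly positive; this is the one point needing care. It holds in the finite-MDP setting assumed here — where the ordering is over the finitely many (deterministic) policies, so the infimum is a minimum of finitely many strictly positive performance gaps, in the spirit of Assumption~\ref{ass:min_perf_gap}. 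I would close with the remark that, in contrast to Theorems~\ref{thm:ordering_DGOMDPs_expct} and~\ref{thm:ordering_GOMDPs_expct}, here the bias of Eq.~\eqref{eq:bias} is absorbed by enlarging $H$ rather than by constraining $\phi$, and that letting $H\to\infty$ the condition becomes vacuous for any bounded $\phi$, consistently with~\citet{ng1999policy}.
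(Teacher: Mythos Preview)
Your proposal is correct and follows essentially the same route as the paper's proof: both cancel the policy-independent $\phi(s_0)$ term from Eq.~\eqref{eq:bias}, bound the terminal-potential contribution by $\gamma^H\Phi$ using $\phi\in[0,\Phi]$, and bound the discounted tail of the return by $\gamma^H\bar R/(1-\gamma)$ using $r\in[0,\bar R]$, arriving at the same sufficient condition $\gamma^H<\bigl(\mathbb{E}_{\rho}[V^{\pi_A}]-\mathbb{E}_{\rho}[V^{\pi_B}]\bigr)/\bigl(\Phi+\bar R/(1-\gamma)\bigr)$. The only organisational difference is that the paper works through a chain of sufficient inequalities (upper-bounding the right-hand side step by step), whereas you directly lower-bound $J'_H(\pi_A)-J'_H(\pi_B)$; your explicit remark that the infimum must be strictly positive for a finite $H$ to exist is a point the paper leaves implicit.
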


The condition stated in Theorem~\ref{thm:ordering_REMDPs_expct} (proof in Appendix~\ref{appx:proof_thm_REMDPS}) is quite restrictive, because preserving the total ordering of policies is a strong equivalence requirement. Indeed, at the numerator, we have the performance gap between two arbitrary policies in the infinite-horizon version of $\mathcal{M}$, which, even if is greater than zero, may be very small, requiring an incredibly long horizon $H$. Therefore, let us consider the least restrictive version of total ordering, that is, preserving only the optimal policy.

\begin{restatable}[Preserving the Optimal Policy in Reshaped Episodic MDPs]{corollary}{Preserving the Optimal Policy in Reshaped Episodic MDPs}\label{thm:optimum_REMDPs_expct}
Let $\mathcal{M}$ be an MDP and $\mathcal{M}'$ be its reshaped version obtained using a bounded potential $\phi: \mathcal{S} \rightarrow [0, \Phi]$. Then, there exists a horizon
\begin{equation*}
    H > \log_{\gamma} \frac{ \mathbb{E}_{\rho}\left[V^{\pi^*}(s)\right] - \mathbb{E}_{\rho}\left[V^{\pi^{**}}(s)\right] }{\Phi + \frac{\Bar{R}}{1-\gamma}},
\end{equation*}
where $\pi^{**}$ is the second-best policy, such that the optimal policy in the reshaped MDP is equivalent to the one in the infinite-horizon version of $\mathcal{M}$.
\end{restatable}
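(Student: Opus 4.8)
The plan is to obtain this statement as a direct specialization of Theorem~\ref{thm:ordering_REMDPs_expct}, reusing the estimates behind that theorem rather than redoing them. Taking expectations in Eq.~\eqref{eq:bias}, for any policy $\pi$ one has
\[
\mathbb{E}_{\tau\sim p_{\pi}}\!\left[\mathcal{R}'(\tau)\right] = \mathbb{E}_{\tau\sim p_{\pi}}\!\left[\mathcal{R}(\tau)\right] + \gamma^H\,\mathbb{E}_{\tau\sim p_{\pi}}\!\left[\phi(s_H)\right] - \mathbb{E}_{\rho}\!\left[\phi(s_0)\right],
\]
where the last term is policy-independent and therefore does not affect which policy attains the maximum. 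Since $r\in[0,\Bar{R}]$, the truncated return satisfies $0 \le \mathbb{E}_{\rho}[V^{\pi}(s)] - \mathbb{E}_{\tau\sim p_{\pi}}[\mathcal{R}(\tau)] \le \gamma^H\Bar{R}/(1-\gamma)$, and since $\phi\in[0,\Phi]$ we have $0 \le \gamma^H\mathbb{E}_{\tau\sim p_{\pi}}[\phi(s_H)] \le \gamma^H\Phi$. Applying the lower estimates to $\pi^*$ and the upper estimates to an arbitrary competitor $\pi_B$ shows that $\mathbb{E}_{\tau\sim p_{\pi^*}}[\mathcal{R}'(\tau)] > \mathbb{E}_{\tau\sim p_{\pi_B}}[\mathcal{R}'(\tau)]$ whenever $\mathbb{E}_{\rho}[V^{\pi^*}(s)] - \mathbb{E}_{\rho}[V^{\pi_B}(s)] > \gamma^H\big(\Phi+\Bar{R}/(1-\gamma)\big)$.

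Next I would note that preserving only the optimal policy (rather than the full ordering as in Theorem~\ref{thm:ordering_REMDPs_expct}) only requires the above inequality for the pairs $(\pi^*,\pi_B)$ with $\pi_B\neq\pi^*$, since then $\pi^*$ is still the maximizer of $\mathbb{E}_{\tau\sim p_{\pi}}[\mathcal{R}'(\tau)]$ over all policies and hence optimal in $\mathcal{M}'$. Among those pairs the most demanding one is the one with the smallest infinite-horizon performance gap, which by the very definition of the second-best policy $\pi^{**}$ is $(\pi^*,\pi^{**})$: indeed $\mathbb{E}_{\rho}[V^{\pi^*}(s)] - \mathbb{E}_{\rho}[V^{\pi_B}(s)] \ge \mathbb{E}_{\rho}[V^{\pi^*}(s)] - \mathbb{E}_{\rho}[V^{\pi^{**}}(s)] > 0$ for all $\pi_B\neq\pi^*$. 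Thus it suffices to make $\gamma^H\big(\Phi+\Bar{R}/(1-\gamma)\big)$ smaller than this single gap.

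Finally I would invert this condition for $H$: because $\gamma\in[0,1)$, $\log_{\gamma}(\cdot)$ is strictly decreasing, so requiring
\[
H > \log_{\gamma}\frac{\mathbb{E}_{\rho}[V^{\pi^*}(s)] - \mathbb{E}_{\rho}[V^{\pi^{**}}(s)]}{\Phi+\Bar{R}/(1-\gamma)}
\]
is equivalent to $\gamma^H\big(\Phi+\Bar{R}/(1-\gamma)\big) < \mathbb{E}_{\rho}[V^{\pi^*}(s)] - \mathbb{E}_{\rho}[V^{\pi^{**}}(s)]$, and such an $H$ exists because the argument of the logarithm is a positive finite number. This yields $\mathbb{E}_{\tau\sim p_{\pi^*}}[\mathcal{R}'(\tau)] > \mathbb{E}_{\tau\sim p_{\pi_B}}[\mathcal{R}'(\tau)]$ for every $\pi_B\neq\pi^*$, which is the claim. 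I do not expect a conceptual obstacle here; the only care needed is bookkeeping, namely combining the truncation-tail bound and the potential bound in the correct worst-case direction for the two policies simultaneously, plus the standard caveat that, if the infinite-horizon MDP admits several optimal policies, ``$\pi^{**}$'' should be read as the best strictly suboptimal policy and ``equivalent to'' as equality up to the optimal equivalence class.
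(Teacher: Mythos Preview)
Your proposal is correct and follows essentially the same approach as the paper: the paper's proof simply selects $\pi_A=\pi^*$ in Theorem~\ref{thm:ordering_REMDPs_expct}, which forces the infimum over admissible $\pi_B$ to be attained at $\pi^{**}$, and this is precisely the specialization you carry out (with the underlying estimates spelled out rather than merely cited). Your additional remarks on the monotonicity of $\log_\gamma$ and on reading $\pi^{**}$ as the best strictly suboptimal policy are sound and do not deviate from the paper's argument.
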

\begin{proof}
    It is enough to select $\pi_A = \pi^*$ in Theorem \ref{thm:ordering_REMDPs_expct}. This implies that $\pi_B$ must be $\pi^{**}$.
\end{proof}

Now, the horizon length depends on the performance gap between the optimal policy and the second-best policy, a concept often used in the literature~\citep{auer2008near}. This is much less stringent than what stated in Theorem~\ref{thm:ordering_REMDPs_expct}.

\section{Evaluation}\label{sec:eval}

In this section, we experimentally evaluate the impact of abstractions coupled with PBRS on sample efficiency. This will be accomplished in the context of a goal-oriented navigation task and some complex ALE games.

\subsection{Grid World}\label{sec:grid_world}

To corroborate the theoretical findings about goal-oriented MDPs described in the previous section, we evaluate the performance of Q-Learning, Abstraction-based PBRS (A-PBRS) Q-Learning,  and Off-Policy Abstraction-based PBRS (OPA-PBRS) Q-Learning against each other. A-PBRS Q-Learning represents the Q-Learning algorithm with the reshaping described in Equation~\ref{eq:reshaping}. OPA-PBRS Q-Learning is the off-policy approach proposed by~\citet{Cipollone2023}. This comparison was done over the $8$-rooms goal-oriented navigation task~\citep{Cipollone2023} depicted in Figure~\ref{fig:grid}. The agent has to move from the top-left room (red) to the right-most one (green) in order to accomplish the task. The abstraction involves considering only the rooms and their connections rather than the connections between the cells (see Appendix~\ref{appx:grid_world}). Further experimental details may be found in Appendix~\ref{appx:exp}. 

This task has a stochastic transition function, so it falls under the scope of Theorem~\ref{thm:ordering_GOMDPs_expct}. Additionally, we cannot compute the theoretical value of $\Phi$. However, even setting $\Phi=1$ allows us not only to achieve the optimum, but also to beat OPA-PBRS Q-Learning as can be seen in Figure~\ref{fig:comparison_degiacomo}. Q-Learning performance is reported for completeness.

\begin{figure}[!ht]
    \centering
    \begin{tikzpicture}
        [
        box/.style={rectangle, draw=gray, minimum size=0.3cm},
        dot/.style = {circle, minimum size=0.2cm,
              inner sep=0pt, outer sep=0pt},
        ]
        \foreach \x in {0,0.3,...,7.5}{
            \foreach \y in {0,0.3,...,3.3}
                \node[box] at (\x,\y){};
        }
        \foreach \x in {0,0.3,...,7.5}{
            \node[box, fill=gray] at (\x,0) {};
        }
        \foreach \x in {0,0.3,...,7.5}{
            \node[box, fill=gray] at (\x, 3) {};
        }
        \foreach \y in {0,0.3,...,3.3}{
            \node[box, fill=gray] at (0, \y) {};
        }
        \foreach \y in {0,0.3,...,3.3}{
            \node[box, fill=gray] at (7.2, \y) {};
        }
        \foreach \y in {0,0.3,...,3.3}{
            \node[box, fill=gray] at (1.5, \y) {};
        }
        \foreach \y in {0,0.3,...,3.3}{
            \node[box, fill=gray] at (3, \y) {};
        }
        \foreach \y in {0,0.3,...,3.3}{
            \node[box, fill=gray] at (4.5, \y) {};
        }
        \foreach \y in {0,0.3,...,3.3}{
            \node[box, fill=gray] at (6.3, \y) {};
        }
        \foreach \x in {0,0.3,...,7.5}{
            \node[box, fill=gray] at (\x, 1.5) {};
        }
        \foreach \x in {3.3,3.6,...,4.5}{
            \node[box, fill=gray] at (\x, 0.3) {};
        }
        \foreach \x in {3.3,3.6,...,4.5}{
            \node[box, fill=gray] at (\x, 2.7) {};
        }
        \foreach \x in {6.6,6.9,...,7.2}{
            \node[box, fill=gray] at (\x, 0.6) {};
        }
        \foreach \x in {0.3,0.6,...,1.5}{
            \foreach \y in {1.8,2.1,...,3.0}
                \node[box, fill=red!30] at (\x,\y){};
        }
        \foreach \x in {0.3,0.6,...,1.5}{
            \foreach \y in {0.3,0.6,...,1.5}
                \node[box, fill=yellow!30] at (\x,\y){};
        }
        \foreach \x in {1.8,2.1,...,3.0}{
            \foreach \y in {0.3,0.6,...,1.5}
                \node[box, fill=blue!30] at (\x,\y){};
        }
        \foreach \x in {1.8,2.1,...,3.0}{
            \foreach \y in {1.8,2.1,...,3.0}
                \node[box, fill=violet!30] at (\x,\y){};
        }
        \foreach \x in {3.3,3.6,...,4.5}{
            \foreach \y in {0.6,0.9,...,2.4}
                \node[box, fill=pink!30] at (\x,\y){};
        }
        \foreach \x in {4.8,5.1,...,6.3}{
            \foreach \y in {0.3,0.6,...,1.5}
                \node[box, fill=orange!30] at (\x,\y){};
        }
        \foreach \x in {4.8,5.1,...,6.3}{
            \foreach \y in {1.8,2.1,...,3.0}
                \node[box, fill=brown!30] at (\x,\y){};
        }
        \foreach \x in {6.6,6.9}{
            \foreach \y in {0.9,1.2}
                \node[box, fill=green!30] at (\x,\y){};
        }
        \node[box, fill=orange!30] at (6.3, 0.9) {};
        \node[box, fill=yellow!30] at (0.9, 1.5) {};
        \node[box, fill=yellow!30] at (1.5, 0.9) {};
        \node[box, fill=blue!30] at (3, 0.6) {};
        \node[box, fill=red!30] at (1.5, 2.4) {};
        \node[box, fill=pink!30] at (4.5, 2.4) {};
        \node[box, fill=orange!30] at (4.5, 0.9) {};
        \node[box, fill=brown!30] at (4.8, 1.5) {};
        \node[dot, fill=red] at (0.6,2.1) {};

    \end{tikzpicture}
    \vspace{0.15in}
    \caption{8-rooms task. The red dot is the agent, the goal is in green, and the gray blocks represent walls. Each color is a macro-state exhaustively defining the aggregation function.}
    \label{fig:grid}
    \vspace{0.2in}
\end{figure}
\subsection{Arcade Learning Environment}
\begin{figure*}[!t]
	\centering
	%
	%
	\begin{tikzpicture}
	\begin{customlegend}[legend columns=8,legend style={align=left,draw=none,column sep=2ex,font=\footnotesize},legend entries={ DQN, CNN-DQN, DQN-Reshaped}]
	%
	%
	%
	\addlegendimage{green1!60!black,ultra thick,dashed,every mark/.append style={solid}, mark=triangle*}   
	\addlegendimage{orange!80!white,dotted,ultra thick,every mark/.append style={solid}, mark=square*}
	\addlegendimage{blue1,ultra thick,every mark/.append style={solid},mark=*}
	%
	
	\end{customlegend}
	\end{tikzpicture}
\\
\begin{subfigure}{0.45\textwidth}
	%
	\begin{tikzpicture}
	\begin{axis}[
	width=\textwidth,
	height=5cm,
	xmin=0,
	xmax=300000,
	xtick={0,50000,...,300000},
	ymin=20,
	ymax=80,
	ytick={0,20,...,80},
	%
	%
	xlabel=Interactions,
	ylabel=Average Episode Length,
	%
	mark options={scale=0.3},
	cycle list name = custom,
	scaled x ticks=base 10:-5,
    legend style={at={(rel axis cs:1,1)},draw=none,fill=none,name=leg},,
    every x tick scale label/.style={at={(xticklabel cs:0.9,5pt)},yshift=-0.07em,xshift=-4em,left,inner sep=0pt},
    xtick scale label code/.code={\scalebox{0.9}{$(\pgfkeysvalueof{/pgfplots/tick scale binop} 10^{#1})$}}
	]
	\addplot
	table [x=i,y=mean-QL,col sep=comma] 
	{csv/atari/comparison_dg.csv};
	\addplot[name path=top,draw=none,forget plot]
	table [name path=top,x=i,y expr=\thisrow{mean-QL}+\thisrow{std-QL},col sep=comma] 
	{csv/atari/comparison_dg.csv};
	\addplot[name path=bot,draw=none,forget plot]
	table [name path=top,x=i,y expr=\thisrow{mean-QL}-\thisrow{std-QL},col sep=comma] 
	{csv/atari/comparison_dg.csv};
	\addplot [forget plot, draw=none,opacity=0.4,pattern=north east lines,fill=green1!60!black]
	fill between[of=top and bot];
	\addplot
	table [x=i,y=mean-dg,col sep=comma] 
	{csv/atari/comparison_dg.csv};
	\addplot[name path=top,draw=none,forget plot]
	table [name path=top,x=i,y expr=\thisrow{mean-dg}+\thisrow{std-dg},col sep=comma] 
	{csv/atari/comparison_dg.csv};
	\addplot[name path=bot,draw=none,forget plot]
	table [name path=top,x=i,y expr=\thisrow{mean-dg}-\thisrow{std-dg},col sep=comma] 
	{csv/atari/comparison_dg.csv};
	\addplot [forget plot, draw=none,opacity=0.4,pattern=north east lines,fill=orange]
	fill between[of=top and bot];
	\addplot
	table [x=i,y=mean-ours,col sep=comma] 
	{csv/atari/comparison_dg.csv};
	\addplot[name path=top,draw=none,forget plot]
	table [name path=top,x=i,y expr=\thisrow{mean-ours}+\thisrow{std-ours},col sep=comma] 
	{csv/atari/comparison_dg.csv};
	\addplot[name path=bot,draw=none,forget plot]
	table [name path=top,x=i,y expr=\thisrow{mean-ours}-\thisrow{std-ours},col sep=comma] 
	{csv/atari/comparison_dg.csv};
	\addplot [forget plot, draw=none,opacity=0.4,pattern=north east lines,fill=blue1]
	fill between[of=top and bot];
    \addlegendentry{\tiny Q-Learning}
    \addlegendentry{\tiny OPA-PBRS Q-Learning}
    \addlegendentry{\tiny A-PBRS Q-Learning}
	\end{axis}
	\end{tikzpicture}
	\caption{8-rooms. Legend inside the figure itself.}
	\label{fig:comparison_degiacomo}
    \vspace{0.15in}
\end{subfigure}
\quad
\begin{subfigure}{0.45\textwidth}
	%
	\begin{tikzpicture}
	\begin{axis}[
	width=\textwidth,
	height=5cm,
	xmin=0,
	xmax=20000000,
	xtick={0,2000000,...,20000000},
	ymin=0,
	ymax=34,
	ytick={0,8.5,...,34},
	%
	%
	xlabel=Interactions,
	ylabel=Average Return,
	%
	mark options={scale=0.3},
	cycle list name = custom,
	scaled x ticks=base 10:-6,
    every x tick scale label/.style={at={(xticklabel cs:0.9,5pt)},yshift=-0.07em,xshift=-4em,left,inner sep=0pt},
    xtick scale label code/.code={\scalebox{0.9}{$(\pgfkeysvalueof{/pgfplots/tick scale binop} 10^{#1})$}}
	]
	\addplot
	table [x=i,y=mean-DQN,col sep=comma] 
	{csv/atari/freeway.csv};
	\addplot[name path=top,draw=none,forget plot]
	table [name path=top,x=i,y expr=\thisrow{mean-DQN}+\thisrow{std-DQN},col sep=comma] 
	{csv/atari/freeway.csv};
	\addplot[name path=bot,draw=none,forget plot]
	table [name path=top,x=i,y expr=\thisrow{mean-DQN}-\thisrow{std-DQN},col sep=comma] 
	{csv/atari/freeway.csv};
	\addplot [forget plot, draw=none,opacity=0.4,pattern=north east lines,fill=green1!60!black]
	fill between[of=top and bot];
	\addplot
	table [x=i,y=mean-CNN-DQN,col sep=comma] 
	{csv/atari/freeway.csv};
	\addplot[name path=top,draw=none,forget plot]
	table [name path=top,x=i,y expr=\thisrow{mean-CNN-DQN}+\thisrow{std-CNN-DQN},col sep=comma] 
	{csv/atari/freeway.csv};
	\addplot[name path=bot,draw=none,forget plot]
	table [name path=top,x=i,y expr=\thisrow{mean-CNN-DQN}-\thisrow{std-CNN-DQN},col sep=comma] 
	{csv/atari/freeway.csv};
	\addplot [forget plot, draw=none,opacity=0.4,pattern=north east lines,fill=orange]
	fill between[of=top and bot];
	\addplot
	table [x=i,y=mean-DQN-Reshaped,col sep=comma] 
	{csv/atari/freeway.csv};
	\addplot[name path=top,draw=none,forget plot]
	table [name path=top,x=i,y expr=\thisrow{mean-DQN-Reshaped}+\thisrow{std-DQN-Reshaped},col sep=comma] 
	{csv/atari/freeway.csv};
	\addplot[name path=bot,draw=none,forget plot]
	table [name path=top,x=i,y expr=\thisrow{mean-DQN-Reshaped}-\thisrow{std-DQN-Reshaped},col sep=comma] 
	{csv/atari/freeway.csv};
	\addplot [forget plot, draw=none,opacity=0.4,pattern=north east lines,fill=blue1]
	fill between[of=top and bot];	

	\end{axis}
	\end{tikzpicture}
	\caption{Freeway.}
    \vspace{0.15in}
	\label{fig:freeway}
\end{subfigure}
\quad
\begin{subfigure}{0.45\textwidth}
	%
	\begin{tikzpicture}
	\begin{axis}[
	width=\textwidth,
	height=5cm,
	xmin=0,
	xmax=50000000,
	xtick={0,10000000,...,50000000},
	ymin=0,
	ymax=15000,
	ytick={0,5000,...,15000},
	%
	%
	xlabel=Interactions,
	ylabel=Average Return,
	%
	mark options={scale=0.3},
	cycle list name = custom,
	scaled x ticks=base 10:-6,
    every x tick scale label/.style={at={(xticklabel cs:0.9,5pt)},yshift=-0.07em,xshift=-4em,left,inner sep=0pt},
    xtick scale label code/.code={\scalebox{0.9}{$(\pgfkeysvalueof{/pgfplots/tick scale binop} 10^{#1})$}},
    every y tick scale label/.style={at={(yticklabel cs:0.9,5pt)},yshift=-0.5em,xshift=0.22em,left,inner sep=0pt},
    ytick scale label code/.code={\rotatebox{90}{\scalebox{0.9}{$(\pgfkeysvalueof{/pgfplots/tick scale binop} 10^{#1})$}}}
	]
	\addplot
	table [x=i,y=mean-DQN,col sep=comma] 
	{csv/atari/qbert.csv};
	\addplot[name path=top,draw=none,forget plot]
	table [name path=top,x=i,y expr=\thisrow{mean-DQN}+\thisrow{std-DQN},col sep=comma] 
	{csv/atari/qbert.csv};
	\addplot[name path=bot,draw=none,forget plot]
	table [name path=top,x=i,y expr=\thisrow{mean-DQN}-\thisrow{std-DQN},col sep=comma] 
	{csv/atari/qbert.csv};
	\addplot [forget plot, draw=none,opacity=0.4,pattern=north east lines,fill=green1!60!black]
	fill between[of=top and bot];
	\addplot
	table [x=i,y=mean-CNN-DQN,col sep=comma] 
	{csv/atari/qbert.csv};
	\addplot[name path=top,draw=none,forget plot]
	table [name path=top,x=i,y expr=\thisrow{mean-CNN-DQN}+\thisrow{std-CNN-DQN},col sep=comma] 
	{csv/atari/qbert.csv};
	\addplot[name path=bot,draw=none,forget plot]
	table [name path=top,x=i,y expr=\thisrow{mean-CNN-DQN}-\thisrow{std-CNN-DQN},col sep=comma] 
	{csv/atari/qbert.csv};
	\addplot [forget plot, draw=none,opacity=0.4,pattern=north east lines,fill=orange]
	fill between[of=top and bot];
	\addplot
	table [x=i,y=mean-DQN-Preproc-Reshaped,col sep=comma] 
	{csv/atari/qbert.csv};
	\addplot[name path=top,draw=none,forget plot]
	table [name path=top,x=i,y expr=\thisrow{mean-DQN-Preproc-Reshaped}+\thisrow{std-DQN-Preproc-Reshaped},col sep=comma] 
	{csv/atari/qbert.csv};
	\addplot[name path=bot,draw=none,forget plot]
	table [name path=top,x=i,y expr=\thisrow{mean-DQN-Preproc-Reshaped}-\thisrow{std-DQN-Preproc-Reshaped},col sep=comma] 
	{csv/atari/qbert.csv};
	\addplot [forget plot, draw=none,opacity=0.4,pattern=north east lines,fill=blue1]
	fill between[of=top and bot];

	\end{axis}
	\end{tikzpicture}
	\caption{Q*Bert.}
	\label{fig:qbert}
\end{subfigure}
\quad
\begin{subfigure}{0.45\textwidth}
	%
	\begin{tikzpicture}
	\begin{axis}[
	width=\textwidth,
	height=5cm,
	xmin=0,
	xmax=50000000,
	xtick={0,10000000,...,50000000},
	ymin=-100,
	ymax=700,
	ytick={-100,100,...,700},
	%
	%
	xlabel=Interactions,
	ylabel=Average Return,
	%
	mark options={scale=0.3},
	cycle list name = custom,
	scaled x ticks=base 10:-6,
    every x tick scale label/.style={at={(xticklabel cs:0.9,5pt)},yshift=-0.07em,xshift=-4em,left,inner sep=0pt},
    xtick scale label code/.code={\scalebox{0.9}{$(\pgfkeysvalueof{/pgfplots/tick scale binop} 10^{#1})$}}
	]
	\addplot
	table [x=i,y=mean-DQN,col sep=comma] 
	{csv/atari/venture.csv};
	\addplot[name path=top,draw=none,forget plot]
	table [name path=top,x=i,y expr=\thisrow{mean-DQN}+\thisrow{std-DQN},col sep=comma] 
	{csv/atari/venture.csv};
	\addplot[name path=bot,draw=none,forget plot]
	table [name path=top,x=i,y expr=\thisrow{mean-DQN}-\thisrow{std-DQN},col sep=comma] 
	{csv/atari/venture.csv};
	\addplot [forget plot, draw=none,opacity=0.4,pattern=north east lines,fill=green1!60!black]
	fill between[of=top and bot];
	\addplot
	table [x=i,y=mean-CNN-DQN,col sep=comma] 
	{csv/atari/venture.csv};
	\addplot[name path=top,draw=none,forget plot]
	table [name path=top,x=i,y expr=\thisrow{mean-CNN-DQN}+\thisrow{std-CNN-DQN},col sep=comma] 
	{csv/atari/venture.csv};
	\addplot[name path=bot,draw=none,forget plot]
	table [name path=top,x=i,y expr=\thisrow{mean-CNN-DQN}-\thisrow{std-CNN-DQN},col sep=comma] 
	{csv/atari/venture.csv};
	\addplot [forget plot, draw=none,opacity=0.4,pattern=north east lines,fill=orange]
	fill between[of=top and bot];
	\addplot
	table [x=i,y=mean-DQN-Reshaped,col sep=comma] 
	{csv/atari/venture.csv};
	\addplot[name path=top,draw=none,forget plot]
	table [name path=top,x=i,y expr=\thisrow{mean-DQN-Reshaped}+\thisrow{std-DQN-Reshaped},col sep=comma] 
	{csv/atari/venture.csv};
	\addplot[name path=bot,draw=none,forget plot]
	table [name path=top,x=i,y expr=\thisrow{mean-DQN-Reshaped}-\thisrow{std-DQN-Reshaped},col sep=comma] 
	{csv/atari/venture.csv};
	\addplot [forget plot, draw=none,opacity=0.4,pattern=north east lines,fill=blue1]
	fill between[of=top and bot];	

	\end{axis}
	\end{tikzpicture}
	\caption{Venture.}
	\label{fig:venture}
\end{subfigure}
\vspace{0.15in}
\caption{Average return $\pm 1$ standard deviation achieved by the algorithms computed using $10$ independent runs.}
\label{fig:experiments}
\vspace{0.05in}
\end{figure*}
We also show how the Abstraction-based PBRS approach behaves in much more challenging tasks, ALE games~\citep{bellemare2013arcade}. We have chosen three tasks of increasing level of difficulty and significantly different among each other in terms of dynamics: \textit{Freeway}, \textit{Q*Bert}, and \textit{Venture}. Each abstraction for these problems has been developed by simplifying the game neglecting enemies or entire areas/dynamics of the game itself. In the following section, we provide a  complete and detailed description of \textit{Freeway} and its abstraction, whereas we defer the reader to Appendix~\ref{appx:qbert} and~\ref{appx:venture} for \textit{Q*Bert}, \textit{Venture}, and their respective abstractions due to space constraints.

To tackle the above-mentioned tasks, we will use DQN (an overview of the algorithm can be found in Section~\ref{sec:background}). In the experiments, we will compare DQN based on fully-connected architectures against the same algorithm, but this time applied on a reshaped environment, where the reshaping is based on Equation~\ref{eq:reshaping}. This is done to evaluate the contribution of the abstraction on sample efficiency. Due to the complexity of the tackled tasks, we will try to adhere as much as possible to theory. However, deviations from it will be allowed whenever these make the implementation feasible or they yield better performance (implementation details in Section~\ref{appx:freeway} and Appendix~\ref{appx:qbert} and~\ref{appx:venture}). The previously mentioned algorithms, using fully-connected neural networks as value function estimators, have been trained on the RAM version of the above mentioned ALE games. This implies that the state space of those environments will be represented by a $128$-dimensional vector whose components may take up to $256$ values encoding various information about the current state of the game (\eg coordinates of the agent, coordinates of the enemies, etc.). To further assess the contribution over sample efficiency and performance achievable through the abstractions, we will additionally compare the fully-connected version of DQN against its CNN version first introduced by~\citet{mnih2015human}. DQN based on CNN architectures will be trained straight from images (the more classical representation of the state space in ALE).

\subsubsection{Abstracting Freeway}\label{appx:freeway}
In \textit{Freeway} (Figure~\ref{fig:freeway_game} in the Appendix), the agent needs to cross a road avoiding the cars driving along the $10$ different lanes. Every time the agent bumps into a car, it will get thrown backward a few steps, whereas upon crossing the entire road it will get one point and restart from the beginning. The game lasts $2$ minutes and $16$ seconds and the overall objective is to cross the entire road as many times as possible.

\textit{Freeway} represents the simplest task we tackled in the context of the ALE benchmark because fully-connected neural networks are able to achieve more than $85\%$ of the CNNs performance on their own. In this task, we propose a very simple abstraction where we neglect the presence of the cars and we model a simple empty road to be crossed by the agent. The agent starts at position $6$ (only the $y$ coordinate is considered because the agent cannot move horizontally) and has to cross position $180$ by moving up, down, or doing nothing ($1$,$2$, and $0$, respectively). Movement actions will increment or decrement the position of the agent by $1$ in the respective direction. Since there are no cars in this abstraction, it is pointless to maximize the amount of times the agent crosses the road; it is sufficient to consider just one crossing. This abstraction has $177$ states and its aggregation function is the following:
\begin{align*}
    &\alpha_{s}(s) = (s_{14}, 0)\\
    &\alpha_{s'}(s, s') = (s'_{14}, s'_{103}-s_{103}),
\end{align*}
where we have used the $14^{th}$ and $103^{rd}$ coordinates ($y$ and score, respectively~\citep{anand2019unsupervised}) of the RAM vector representing the state space of the original task.\footnote{$\alpha_s$ aggregates $s$, whereas $\alpha_{s'}$ aggregates $s'$.} This aggregation function allows us to deal with a smaller abstraction because we do not have to consider multiple crossings of the whole road, but only one. Finally, this proposed abstraction is solved through VI and its solution is used to perform reshaping on the original task as per Equation~\ref{eq:reshaping}.

\subsubsection{Results}\label{sec:results}
In Figures~\ref{fig:freeway},~\ref{fig:qbert}, and~\ref{fig:venture}, we report the learning curves associated to the above described algorithms in the context of \textit{Freeway}, \textit{Q*Bert}, and \textit{Venture}, respectively. DQN with fully-connected neural networks and without reshaping is termed DQN, its equivalent version leveraging Abstraction-based PBRS is called DQN-Reshaped, and DQN based on CNN is called CNN-DQN.

In the context of \textit{Freeway}, as shown in Figure~\ref{fig:freeway}, we observe higher sample efficiency of DQN-Reshaped \wrt DQN even though it is not able to achieve the same final performance as CNN-DQN. This fact is due to the usage of a very simplistic abstraction stemming from the assumption of no cars on the road (an exhaustive description of the abstraction may be found in Section~\ref{appx:freeway}). Unfortunately, accounting for the cars could not be done in the context of this work because of the complex dynamic triggered whenever the agent bumps into a car that could not be reverse engineered from interactions nor from the assembly code of the game.

In \textit{Q*Bert}, Figure~\ref{fig:qbert}, flat fully-connected neural networks, when coupled with PBRS and abstractions (exhaustive description in Appendix~\ref{appx:qbert}), are able to achieve the same performances of CNNs with approximately one fourth the amount of interactions. Additionally, DQN-Reshaped manages to have the best average performance at the end of the learning process. 

In the context of \textit{Venture} (see Figure~\ref{fig:venture}), not only we have greater sample efficiency, but reshaping and abstractions (exhaustive description in Appendix~\ref{appx:venture}) allow fully-connected architectures to achieve the best performance by a great margin \wrt  all the other solutions. 

Finally, the quality of the abstraction, in terms of how well it approximates the original task, plays an important role. Indeed, we can observe a significant difference between the improvement in performance achieved in the context of \textit{Freeway} \wrt \textit{Q*Bert} or \textit{Venture}. This is due to the different amount of knowledge provided by a simplistic abstraction (in \textit{Freeway}) compared to a more complex one (in \textit{Q*Bert} and \textit{Venture}). Furthermore, the longer the sequence of actions to be executed in order to experience reward, the higher the benefit of reshaping. As a matter of fact, experiencing reward in \textit{Freeway} and \textit{Q*Bert} is quite easy, whereas, in \textit{Venture}, the agent is required to execute a long plan in order to get any positive feedback from the environment. This is reflected by the fact that abstractions in \textit{Venture} provide the best improvement. All the experimental details may be found in Appendix~\ref{appx:exp}.\footnote{Code will be made available upon request.}

\section{Related Works}\label{sec:related_works}

\citet{ng1999policy} were the first authors to formalize theoretical properties of reward shaping showing that PBRS preserves optimal and near-optimal policies.~\citet{wiewiora2003potential} proves that, under the same set of collected experience, for Q-learning, SARSA, and other TD-based algorithms, a suitable action-value function initialization is equivalent to PBRS. The first work formally discussing the bias introduced by reshaping when dealing with finite horizons can be found in~\citep{eck2016potential}. In the context of POMDPs, they show that the evaluation of policies becomes the same as the horizon length tends to infinity. Furthermore, they provide a condition under which two policies, one optimizing the reshaped POMDP and one the original, will be different from each other. This is done to understand when the optimal policy in the finite-horizon reshaped POMDP differs from the optimal policy of the original finite-horizon POMDP. In contrast, our Theorem~\ref{thm:ordering_REMDPs_expct} produces a condition to control the bias \wrt to the infinite-horizon MDP, the true problem of interest. Besides, Theorem~\ref{thm:ordering_DGOMDPs_expct} and~\ref{thm:ordering_GOMDPs_expct} establish that, performing reshaping when the horizon is finite does not change the policy ordering \wrt the starting finite-horizon problem for the goal-oriented case (under some assumptions if the MDP is stochastic).~\citet{grzes2017reward}, instead, tackle the above-mentioned bias by setting the potential function at the end of the trajectory to zero, hence leveraging non-stationary potential functions~\citep{devlin2012dynamic}, whereas~\citet{forbes2024potential} using terms that do not depend on the future actions of the agent.

Along another direction,~\citet{cheng2021heuristic} and~\citet{burden2021latent} learn their potential. The former learns it beforehand with offline data using Monte-Carlo Regression, whereas the latter through direct interactions with the environment. Instead, to get the potential, we do not need data or the original environment to interact with while training.

There is not much theoretical analysis in the literature making explicit the performance improvements attainable with reward shaping. One exception is the work of~\citet{gupta2022unpacking}, where they provide a regret expression for a new version of UCBVI called UCBVI-Shaped. However, they assume to have a value function estimator upper-bounding the optimal value function when multiplied by a parameter $\beta$ and their approach is meant for finite state and action problems. Fully self-supervised reward-shaping techniques~\citep{zheng2018learning, memarian2021self} may not be able to deal with sparse reward settings because they lack an explicit exploration incentive. Therefore, to mitigate this limitation,~\citet{devidzeexploration} propose to learn an intrinsic reward function combined with a count-based exploration bonus aiming at accelerating the agent's endeavors in maximizing the extrinsic reward. Similarly,~\citet{wang2024efficient} propose an intrinsic reward to encourage exploration. This is achieved defining the discrepancy between states as differences of potentials, where the potential function is a learned inverse dynamic bi-simulation metric. Along a different direction,~\citet{hu2020learning} study how to learn the best way to utilize a reshaping function in order to solve the RL problem at hand. They do this through a bi-level optimization approach, where at the first level they optimize the policy to maximize the reshaped rewards, and at the second level they optimize the reshaping function for actual reward maximization. Differently \wrt these works we study the bias introduced by PBRS and explicitly leverage abstractions to choose potentials.

The work presented in this paper also draws inspiration from abstractions and how they are used in classical planning~\citep{knoblock1994automatically}.
The idea is to automatically build a simplified version of the problem before the search starts.
This is obtained by abstracting away all but a (small) part of the task, resulting in a new problem that can be optimally solved fast~\citep{haslum2007domain}. 
The plans that solve these abstract problems are then used as heuristics that help guiding the search algorithm toward the goal in the actual problem. Closely related to the above-mentioned way of using abstractions is the concept of abstractions for MDPs~\citep{congeduti2022cross, starre2023analysis}. However, they usually deal with finite state and action MDPs. Furthermore, they directly employ the solutions obtained from the abstractions onto the real problem like~\citet{wang2022causal, wang2024building}.

Finally,~\citet{Cipollone2023} propose an off-policy RL scheme to leverage hierarchies of abstractions and improve sample efficiency. They mainly focus on goal-oriented MDPs for which they provide a theoretical analysis on the exploration quality of the behavioral policy in their proposed solution that depends on the size of the abstraction state space. Hence, with sufficiently large abstractions (\eg on the order of $10^5$ states) easily achievable in the context of ALE games (see Section~\ref{sec:eval}), the bound is not helpful. Furthermore, thanks to Theorem~\ref{thm:ordering_DGOMDPs_expct} and~\ref{thm:ordering_GOMDPs_expct}, we can avoid using their off-policy scheme, which requires twice the computational complexity to maintain convergence to the optimum. In the context of general MDPs, their approach still represents a sound way to preserve convergence guarantees. However, whenever we cannot afford the computational burden of training two separate agents, thanks to Theorem~\ref{thm:ordering_REMDPs_expct}, we can try to control the amount of bias we introduce in the reshaped MDP.

\section{Conclusions and Future Work}\label{sec:conclusions}
In this paper, we have tackled the problem of sample efficiency in RL through abstraction and PBRS, an idea that was recently introduced in the literature by~\citet{Cipollone2023}. Contrary to them, we have embraced the bias of PBRS in the context of finite horizons, investigated its effects on policy ordering for goal-oriented and general MDPs, and assessed the contribution to sample efficiency of abstractions coupled with PBRS in highly complex domains such as the ALE benchmark. Given the impressive sample efficiency achieved, devising ways to learn abstractions automatically from interactions becomes an important avenue for future developments. This could be accomplished by taking inspiration from the Hierarchical Reinforcement Learning literature~\citep{klissarov2023deep}. Moreover, being able to build abstractions over state spaces represented by images is another interesting future direction for this work that could be pursued through learning an action schema directly from images~\citep{asai2018classical}. Finally, the idea of using abstractions as a guide to solve the original task is deeply rooted into the planning literature and the concept of heuristics; it would be worth investigating their connections~\citep{canonaco2024projection}.
\section*{Disclaimer}
This paper was prepared for informational purposes by the Artificial Intelligence Research group of JPMorgan Chase \& Co. and its affiliates ("JP Morgan'') and is not a product of the Research Department of JP Morgan. JP Morgan makes no representation and warranty whatsoever and disclaims all liability, for the completeness, accuracy or reliability of the information contained herein. This document is not intended as investment research or investment advice, or a recommendation, offer or solicitation for the purchase or sale of any security, financial instrument, financial product or service, or to be used in any way for evaluating the merits of participating in any transaction, and shall not constitute a solicitation under any jurisdiction or to any person, if such solicitation under such jurisdiction or to such person would be unlawful.

© 2024 JPMorgan Chase \& Co. All rights reserved.

\bibliography{bibliography}

\begin{thebibliography}{45}
\providecommand{\natexlab}[1]{#1}
\providecommand{\url}[1]{\texttt{#1}}
\expandafter\ifx\csname urlstyle\endcsname\relax
  \providecommand{\doi}[1]{doi: #1}\else
  \providecommand{\doi}{doi: \begingroup \urlstyle{rm}\Url}\fi

\bibitem[Anand et~al.(2019)Anand, Racah, Ozair, Bengio, C{\^o}t{\'e}, and
  Hjelm]{anand2019unsupervised}
A.~Anand, E.~Racah, S.~Ozair, Y.~Bengio, M.-A. C{\^o}t{\'e}, and R.~D. Hjelm.
\newblock Unsupervised state representation learning in atari.
\newblock In \emph{Advances in Neural Information Processing Systems}, 2019.

\bibitem[Asai and Fukunaga(2018)]{asai2018classical}
M.~Asai and A.~Fukunaga.
\newblock Classical planning in deep latent space: Bridging the
  subsymbolic-symbolic boundary.
\newblock In \emph{AAAI Conference on Artificial Intelligence}, 2018.

\bibitem[Auer et~al.(2008)Auer, Jaksch, and Ortner]{auer2008near}
P.~Auer, T.~Jaksch, and R.~Ortner.
\newblock Near-optimal regret bounds for reinforcement learning.
\newblock In \emph{Advances in Neural Information Processing Systems}, 2008.

\bibitem[Bellemare et~al.(2013)Bellemare, Naddaf, Veness, and
  Bowling]{bellemare2013arcade}
M.~G. Bellemare, Y.~Naddaf, J.~Veness, and M.~Bowling.
\newblock The arcade learning environment: An evaluation platform for general
  agents.
\newblock \emph{Journal of Artificial Intelligence Research}, 2013.

\bibitem[Burden et~al.(2021)Burden, Siahroudi, and Kudenko]{burden2021latent}
J.~Burden, S.~K. Siahroudi, and D.~Kudenko.
\newblock Latent property state abstraction for reinforcement learning.
\newblock In \emph{AAMAS Workshop on Adaptive Learning Agents}, 2021.

\bibitem[Canonaco et~al.(2024)Canonaco, Pozanco, and
  Borrajo]{canonaco2024projection}
G.~Canonaco, A.~Pozanco, and D.~Borrajo.
\newblock Projection abstractions in planning under the lenses of abstractions
  for mdps.
\newblock \emph{arXiv preprint arXiv:2412.02615}, 2024.

\bibitem[Cheng et~al.(2021)Cheng, Kolobov, and Swaminathan]{cheng2021heuristic}
C.-A. Cheng, A.~Kolobov, and A.~Swaminathan.
\newblock Heuristic-guided reinforcement learning.
\newblock In \emph{Advances in Neural Information Processing Systems}, 2021.

\bibitem[Cipollone et~al.(2023)Cipollone, De~Giacomo, Favorito, Iocchi, and
  Patrizi]{Cipollone2023}
R.~Cipollone, G.~De~Giacomo, M.~Favorito, L.~Iocchi, and F.~Patrizi.
\newblock Exploiting multiple abstractions in episodic rl via reward shaping.
\newblock In \emph{AAAI Conference on Artificial Intelligence}, 2023.

\bibitem[Congeduti and Frans(2022)]{congeduti2022cross}
E.~Congeduti and A.~Frans.
\newblock A cross-field review of state abstraction for markov decision
  processes.
\newblock In \emph{34th Benelux Conference on Artificial Intelligence (BNAIC)
  and the 30th Belgian Dutch Conference on Machine Learning (Benelearn)}, 2022.

\bibitem[Culberson and Schaeffer(1998)]{DBLP:journals/ci/CulbersonS98}
J.~C. Culberson and J.~Schaeffer.
\newblock Pattern databases.
\newblock \emph{Computational Intelligence}, 1998.

\bibitem[Deisenroth et~al.(2013)Deisenroth, Neumann, Peters,
  et~al.]{deisenroth2013survey}
M.~P. Deisenroth, G.~Neumann, J.~Peters, et~al.
\newblock A survey on policy search for robotics.
\newblock \emph{Foundations and Trends{\textregistered} in Robotics}, 2013.

\bibitem[D'Eramo et~al.(2021)D'Eramo, Tateo, Bonarini, Restelli, and
  Peters]{JMLR:v22:18-056}
C.~D'Eramo, D.~Tateo, A.~Bonarini, M.~Restelli, and J.~Peters.
\newblock Mushroomrl: Simplifying reinforcement learning research.
\newblock \emph{Journal of Machine Learning Research}, 2021.

\bibitem[Devidze et~al.(2022)Devidze, Kamalaruban, and
  Singla]{devidzeexploration}
R.~Devidze, P.~Kamalaruban, and A.~Singla.
\newblock Exploration-guided reward shaping for reinforcement learning under
  sparse rewards.
\newblock In \emph{Advances in Neural Information Processing Systems}, 2022.

\bibitem[Devlin and Kudenko(2012)]{devlin2012dynamic}
S.~M. Devlin and D.~Kudenko.
\newblock Dynamic potential-based reward shaping.
\newblock In \emph{International Conference on Autonomous Agents and
  Multi-Agent Systems}, 2012.

\bibitem[Eck et~al.(2016)Eck, Soh, Devlin, and Kudenko]{eck2016potential}
A.~Eck, L.-K. Soh, S.~Devlin, and D.~Kudenko.
\newblock Potential-based reward shaping for finite horizon online pomdp
  planning.
\newblock \emph{Autonomous Agents and Multi-Agent Systems}, 2016.

\bibitem[Forbes et~al.(2024)Forbes, Gupta, Villalobos-Arias, Potts, Jhala, and
  Roberts]{forbes2024potential}
G.~C. Forbes, N.~Gupta, L.~Villalobos-Arias, C.~M. Potts, A.~Jhala, and D.~L.
  Roberts.
\newblock Potential-based reward shaping for intrinsic motivation.
\newblock In \emph{International Conference on Autonomous Agents and
  Multi-Agent Systems}, 2024.

\bibitem[Grze{\'s}(2017)]{grzes2017reward}
M.~Grze{\'s}.
\newblock Reward shaping in episodic reinforcement learning.
\newblock In \emph{International Conference on Autonomous Agents and
  Multi-Agent Systems}, 2017.

\bibitem[Gupta et~al.(2022)Gupta, Pacchiano, Zhai, Kakade, and
  Levine]{gupta2022unpacking}
A.~Gupta, A.~Pacchiano, Y.~Zhai, S.~M. Kakade, and S.~Levine.
\newblock Unpacking reward shaping: Understanding the benefits of reward
  engineering on sample complexity.
\newblock In \emph{Advances in Neural Information Processing Systems}, 2022.

\bibitem[Haslum et~al.(2007)Haslum, Botea, Helmert, Bonet, Koenig,
  et~al.]{haslum2007domain}
P.~Haslum, A.~Botea, M.~Helmert, B.~Bonet, S.~Koenig, et~al.
\newblock Domain-independent construction of pattern database heuristics for
  cost-optimal planning.
\newblock In \emph{AAAI Conference on Artificial Intelligence}, 2007.

\bibitem[Hu et~al.(2020)Hu, Wang, Jia, Wang, Chen, Hao, Wu, and
  Fan]{hu2020learning}
Y.~Hu, W.~Wang, H.~Jia, Y.~Wang, Y.~Chen, J.~Hao, F.~Wu, and C.~Fan.
\newblock Learning to utilize shaping rewards: A new approach of reward
  shaping.
\newblock In \emph{Advances in Neural Information Processing Systems}, 2020.

\bibitem[Kingma and Ba(2015)]{KingBa15}
D.~Kingma and J.~Ba.
\newblock Adam: A method for stochastic optimization.
\newblock In \emph{International Conference on Learning Representations}, 2015.

\bibitem[Klissarov and Machado(2023)]{klissarov2023deep}
M.~Klissarov and M.~C. Machado.
\newblock Deep laplacian-based options for temporally-extended exploration.
\newblock In \emph{International Conference on Machine Learning}, 2023.

\bibitem[Knoblock(1994)]{knoblock1994automatically}
C.~A. Knoblock.
\newblock Automatically generating abstractions for planning.
\newblock \emph{Artificial Intelligence}, 1994.

\bibitem[Lazaric(2012)]{lazaric2012transfer}
A.~Lazaric.
\newblock Transfer in reinforcement learning: a framework and a survey.
\newblock \emph{Reinforcement Learning: State-of-the-Art}, 2012.

\bibitem[LeCun et~al.(1998)LeCun, Bottou, Bengio, and
  Haffner]{lecun1998gradient}
Y.~LeCun, L.~Bottou, Y.~Bengio, and P.~Haffner.
\newblock Gradient-based learning applied to document recognition.
\newblock \emph{Proceedings of the IEEE}, 1998.

\bibitem[Memarian et~al.(2021)Memarian, Goo, Lioutikov, Niekum, and
  Topcu]{memarian2021self}
F.~Memarian, W.~Goo, R.~Lioutikov, S.~Niekum, and U.~Topcu.
\newblock Self-supervised online reward shaping in sparse-reward environments.
\newblock In \emph{IEEE/RSJ International Conference on Intelligent Robots and
  Systems}, 2021.

\bibitem[Mnih et~al.(2015)Mnih, Kavukcuoglu, Silver, Rusu, Veness, Bellemare,
  Graves, Riedmiller, Fidjeland, Ostrovski, et~al.]{mnih2015human}
V.~Mnih, K.~Kavukcuoglu, D.~Silver, A.~A. Rusu, J.~Veness, M.~G. Bellemare,
  A.~Graves, M.~Riedmiller, A.~K. Fidjeland, G.~Ostrovski, et~al.
\newblock Human-level control through deep reinforcement learning.
\newblock \emph{Nature}, 2015.

\bibitem[Ng et~al.(1999)Ng, Harada, and Russell]{ng1999policy}
A.~Y. Ng, D.~Harada, and S.~Russell.
\newblock Policy invariance under reward transformations: Theory and
  application to reward shaping.
\newblock In \emph{International Conference on Machine Learning}, 1999.

\bibitem[OpenAI et~al.(2019)OpenAI, Berner, Brockman, Chan, Cheung, Dębiak,
  Dennison, Farhi, Fischer, Hashme, Hesse, Józefowicz, Gray, Olsson, Pachocki,
  Petrov, de~Oliveira~Pinto, Raiman, Salimans, Schlatter, Schneider, Sidor,
  Sutskever, Tang, Wolski, and Zhang]{openai2019dota}
OpenAI, C.~Berner, G.~Brockman, B.~Chan, V.~Cheung, P.~Dębiak, C.~Dennison,
  D.~Farhi, Q.~Fischer, S.~Hashme, C.~Hesse, R.~Józefowicz, S.~Gray,
  C.~Olsson, J.~Pachocki, M.~Petrov, H.~P. de~Oliveira~Pinto, J.~Raiman,
  T.~Salimans, J.~Schlatter, J.~Schneider, S.~Sidor, I.~Sutskever, J.~Tang,
  F.~Wolski, and S.~Zhang.
\newblock Dota 2 with large scale deep reinforcement learning.
\newblock \emph{arXiv preprint arXiv:1912.06680}, 2019.

\bibitem[Papini et~al.(2018)Papini, Binaghi, Canonaco, Pirotta, and
  Restelli]{papini2018stochastic}
M.~Papini, D.~Binaghi, G.~Canonaco, M.~Pirotta, and M.~Restelli.
\newblock Stochastic variance-reduced policy gradient.
\newblock In \emph{International Conference on Machine Learning}, 2018.

\bibitem[Puterman(2014)]{puterman2014markov}
M.~L. Puterman.
\newblock \emph{Markov decision processes: discrete stochastic dynamic
  programming}.
\newblock John Wiley \& Sons, 2014.

\bibitem[Rummery and Niranjan(1994)]{rummery1994line}
G.~A. Rummery and M.~Niranjan.
\newblock \emph{On-line Q-learning using connectionist systems}.
\newblock University of Cambridge, Department of Engineering Cambridge, UK,
  1994.

\bibitem[Schulman et~al.(2015)Schulman, Moritz, Levine, Jordan, and
  Abbeel]{schulman2015high}
J.~Schulman, P.~Moritz, S.~Levine, M.~Jordan, and P.~Abbeel.
\newblock High-dimensional continuous control using generalized advantage
  estimation.
\newblock \emph{arXiv preprint arXiv:1506.02438}, 2015.

\bibitem[Silver et~al.(2018)Silver, Hubert, Schrittwieser, Antonoglou, Lai,
  Guez, Lanctot, Sifre, Kumaran, Graepel, et~al.]{silver2018general}
D.~Silver, T.~Hubert, J.~Schrittwieser, I.~Antonoglou, M.~Lai, A.~Guez,
  M.~Lanctot, L.~Sifre, D.~Kumaran, T.~Graepel, et~al.
\newblock A general reinforcement learning algorithm that masters chess, shogi,
  and go through self-play.
\newblock \emph{Science}, 2018.

\bibitem[Starre et~al.(2023)Starre, Loog, Congeduti, and
  Oliehoek]{starre2023analysis}
R.~A. Starre, M.~Loog, E.~Congeduti, and F.~A. Oliehoek.
\newblock An analysis of model-based reinforcement learning from abstracted
  observations.
\newblock \emph{Transactions on Machine Learning Research}, 2023.

\bibitem[Sutton et~al.(1999)Sutton, McAllester, Singh, and
  Mansour]{sutton1999policy}
R.~S. Sutton, D.~McAllester, S.~Singh, and Y.~Mansour.
\newblock Policy gradient methods for reinforcement learning with function
  approximation.
\newblock In \emph{Advances in Neural Information Processing Systems}, 1999.

\bibitem[Taylor and Stone(2009)]{taylor2009transfer}
M.~E. Taylor and P.~Stone.
\newblock Transfer learning for reinforcement learning domains: A survey.
\newblock \emph{Journal of Machine Learning Research}, 2009.

\bibitem[Touati et~al.(2022)Touati, Rapin, and Ollivier]{touati2022does}
A.~Touati, J.~Rapin, and Y.~Ollivier.
\newblock Does zero-shot reinforcement learning exist?
\newblock In \emph{International Conference on Learning Representations}, 2022.

\bibitem[Vinyals et~al.(2019)Vinyals, Babuschkin, Czarnecki, Mathieu, Dudzik,
  Chung, Choi, Powell, Ewalds, Georgiev, et~al.]{vinyals2019grandmaster}
O.~Vinyals, I.~Babuschkin, W.~M. Czarnecki, M.~Mathieu, A.~Dudzik, J.~Chung,
  D.~H. Choi, R.~Powell, T.~Ewalds, P.~Georgiev, et~al.
\newblock Grandmaster level in starcraft ii using multi-agent reinforcement
  learning.
\newblock \emph{Nature}, 2019.

\bibitem[Wang et~al.(2024{\natexlab{a}})Wang, Yang, Dong, Sun, Liu,
  et~al.]{wang2024efficient}
Y.~Wang, M.~Yang, R.~Dong, B.~Sun, F.~Liu, et~al.
\newblock Efficient potential-based exploration in reinforcement learning using
  inverse dynamic bisimulation metric.
\newblock In \emph{Advances in Neural Information Processing Systems},
  2024{\natexlab{a}}.

\bibitem[Wang et~al.(2022)Wang, Xiao, Xu, Zhu, and Stone]{wang2022causal}
Z.~Wang, X.~Xiao, Z.~Xu, Y.~Zhu, and P.~Stone.
\newblock Causal dynamics learning for task-independent state abstraction.
\newblock In \emph{International Conference on Machine Learning}, 2022.

\bibitem[Wang et~al.(2024{\natexlab{b}})Wang, Wang, Xiao, Zhu, and
  Stone]{wang2024building}
Z.~Wang, C.~Wang, X.~Xiao, Y.~Zhu, and P.~Stone.
\newblock Building minimal and reusable causal state abstractions for
  reinforcement learning.
\newblock In \emph{AAAI Conference on Artificial Intelligence},
  2024{\natexlab{b}}.

\bibitem[Watkins and Dayan(1992)]{watkins1992q}
C.~J. Watkins and P.~Dayan.
\newblock Q-learning.
\newblock \emph{Machine Learning}, 1992.

\bibitem[Wiewiora(2003)]{wiewiora2003potential}
E.~Wiewiora.
\newblock Potential-based shaping and q-value initialization are equivalent.
\newblock \emph{Journal of Artificial Intelligence Research}, 2003.

\bibitem[Zheng et~al.(2018)Zheng, Oh, and Singh]{zheng2018learning}
Z.~Zheng, J.~Oh, and S.~Singh.
\newblock On learning intrinsic rewards for policy gradient methods.
\newblock In \emph{Advances in Neural Information Processing Systems}, 2018.

\end{thebibliography}

\newpage

\onecolumn

\appendix

\section{Proofs}\label{appx:proofs}

\subsection{Proof of Proposition~\ref{prop:VI_performance}}\label{appx:vi_proof}
\begin{proof}
    Applying VI in the original MDP $\mathcal{M}$, where $\mathcal{V}_{n, \mathcal{M}}^{src}$ is the output after $n$ iterations, yields the following bound by the $\gamma$-contraction property:
	\begin{equation}
		||\mathcal{V}_{\mathcal{M}}^* - \mathcal{V}_{n, \mathcal{M}}^{src}||_{\infty}\le \gamma^n||\mathcal{V}_{\mathcal{M}}^* - \mathcal{V}_0||_\infty. \label{eq:VI_bound_original}
	\end{equation}
Applying VI in the reshaped MDP $\mathcal{M'}$, where $\mathcal{V}_{n', \mathcal{M}}^{dst}$ is the output after $n'$ iterations brought back into the original MDP through Eq.~\eqref{eq:corollary2_ng}, yields the following bound by the $\gamma$-contraction property:
    \begin{equation}
        ||\mathcal{V}_{\mathcal{M}}^* - \cancel{\phi} - \mathcal{V}_{n', \mathcal{M}}^{dst} + \cancel{\phi}||_{\infty} \le \gamma^{n'}||\mathcal{V}_{\mathcal{M}}^* - \phi - \mathcal{V}_0||_\infty. \label{eq:VI_bound_reshaped}
    \end{equation}
Now, imposing inequalities \eqref{eq:VI_bound_original} and \eqref{eq:VI_bound_reshaped} both smaller than $\epsilon$, we get: 
\begin{align*}
    &||\mathcal{V}_{\mathcal{M}}^* - \mathcal{V}_{n, \mathcal{M}}^{src}||_{\infty} \le \gamma^n||\mathcal{V}_{\mathcal{M}}^* - \mathcal{V}_0||_\infty \le \epsilon 
    \\
    &||\mathcal{V}_{\mathcal{M}}^* - \mathcal{V}_{n', \mathcal{M}}^{dst}||_{\infty} \le \gamma^{n'}||\mathcal{V}_{\mathcal{M}}^* - \phi - \mathcal{V}_0||_\infty \le \epsilon.
\end{align*}
Using the fact that $\gamma < 1$,  we can rewrite the inequalities above as:
\begin{align*}
    &n \geq \log_{\gamma}\bigg(\frac{\epsilon}{||\mathcal{V}_{\mathcal{M}}^* - \mathcal{V}_0||_\infty}\bigg)
    \\
    &n' \geq \log_{\gamma}\bigg(\frac{\epsilon}{||\mathcal{V}_{\mathcal{M}}^* - \phi - \mathcal{V}_0||_\infty}\bigg)
\end{align*}
then, since we assumed $||\mathcal{V}_{\mathcal{M}}^* - \phi - \mathcal{V}_0||_\infty \le ||\mathcal{V}_{\mathcal{M}}^* - \mathcal{V}_0||_\infty$, which can be made true through $\phi$ being an approximation of $\mathcal{V}_{\mathcal{M}}^*$, we have:
\begin{align*}
    \log_{\gamma}\bigg(\frac{\epsilon}{||\mathcal{V}_{\mathcal{M}}^* - \phi - \mathcal{V}_0||_\infty}\bigg) \leq \log_{\gamma}\bigg(\frac{\epsilon}{||\mathcal{V}_{\mathcal{M}}^* - \mathcal{V}_0||_\infty}\bigg)
\end{align*}
hence, VI applied on the reshaped problem will require a smaller minimum number of iterations to guarantee an $\epsilon$-optimal solution. Notice that, the superscripts \textit{src} and \textit{dst} are used to disambiguate the two solutions of VI that may be different.
\end{proof}

\subsection{Proof of Theorem~\ref{thm:ordering_GOMDPs_expct}}\label{appx:proof_thm_GOMDPs}
\begin{proof}
    Let $\pi^*$ be the optimal policy in the original MDP, and $p_{\pi}(\tau) = \rho(s_0)\pi(a_0| s_0)\Pi_{t=1}^{H-1}\mathcal{P}(s_{t+1}|s_t, a_t)\pi(a_t|s_t)~\forall \pi$ be the density of the distribution over the set of all trajectories induced by the policy and the transition dynamics. 

To show that the policy ordering is preserved, we want to prove:
    \begin{align}
        &\mathbb{E}_{\tau\sim p_{\pi^*}}\left[\mathcal{R}'(\tau)\right]>\mathbb{E}_{\tau \sim p_{\pi}}\left[\mathcal{R}'(\tau)\right]~\forall\pi\neq\pi^*,\pi\in\Pi_H\\
        &\mathbb{E}_{\tau\sim p_{\pi^*}}\left[\mathcal{R}(\tau)+\gamma^{h(\tau)}\phi(s_h^{\pi^*})-\cancel{\phi(s_0)}\right]>\nonumber\\
        &\qquad\mathbb{E}_{\tau \sim p_{\pi}}\left[\mathcal{R}(\tau)+\gamma^{h(\tau)}\phi(s_h^{\pi})-\cancel{\phi(s_0)}\right]~\forall\pi\neq\pi^*, \pi\in\Pi_H,\label{eq:reshaped_inequality_expct1}
    \end{align}
    where we have used $\mathcal{R}'(\tau) = \mathcal{R}(\tau) + \gamma^H\phi(s_H) -\phi(s_0)$~\citep{grzes2017reward} in \eqref{eq:reshaped_inequality_expct1}, and $h(\tau)\leq H$ represents the random variable associated to the number of steps from the initial state to the goal in $\tau$ that depends on the policy and environment dynamics.
    We know that $\mathbb{E}_{\tau\sim p_{\pi^*}}\left[\mathcal{R}(\tau)\right]>\mathbb{E}_{\tau \sim p_{\pi}}\left[\mathcal{R}(\tau)\right]~\forall\pi\neq\pi^*$, so we only have to prove:
    \begin{align*}
        \mathbb{E}_{\tau \sim p_{\pi^*}}\left[\gamma^{h(\tau)}\phi(s_h^{\pi^*})\right]>\mathbb{E}_{\tau\sim p_{\pi}}\left[\gamma^{h(\tau)}\phi(s_h^{\pi})\right]~\forall\pi\neq\pi^*,\pi\in\Pi_H.
    \end{align*}
    Then:
    \begin{align*}
        \mathbb{E}_{\tau\sim p_{\pi^*}}\left[\gamma^{h(\tau)}\phi(s_h^{\pi^*})\mathbb{I}\{s_h^{\pi^*}\in S_G\}\right]+\mathbb{E}_{\tau\sim p_{\pi^*}}\left[\gamma^{H}\phi(s_H)\mathbb{I}\{s_H\notin S_G\}\right]>\\\mathbb{E}_{\tau\sim p_{\pi}}\left[\gamma^{h(\tau)}\phi(s_h^{\pi})\mathbb{I}\{s_h^{\pi}\in S_G\}\right]+\mathbb{E}_{\tau\sim p_{\pi}}\left[\gamma^{H}\phi(s_H)\mathbb{I}\{s_H\notin S_G\}\right],
    \end{align*}
    where $\mathbb{I}\{s_h^{\pi}\in S_G\}$ is the indicator function selecting only the trajectories that end in a goal state in at most $H$ steps, and $\mathbb{I}\{s_H\notin S_G\} = \mathbb{I}_E$ ($E$ defined in Section~\ref{sec:bias_analysis}) selects all the trajectories not ending in the goal within $H$ steps. Now, since $\mathbb{E}_{\tau\sim p_{\pi}}\left[\gamma^{h(\tau)}\phi(s_h^{\pi})\mathbb{I}\{s_h^{\pi}\in S_G\}\right]=\gamma\mathbb{E}_{\tau\sim p_{\pi}}\left[\mathcal{R}(\tau)\right]\Phi~\forall\pi$ because in goal-oriented MDPs $\mathbb{E}_{\tau\sim p_{\pi}}\left[\gamma^{h(\tau)-1}\mathbb{I}\{s_h^{\pi}\in S_G\}\right]=\mathbb{E}_{\tau\sim p_{\pi}}\left[\gamma^{h(\tau)-1}r(s_{h(\tau)-1},a_{h(\tau)-1},s_{h(\tau)})\right]$, then:
    \begin{equation*}
        \gamma\left(\mathbb{E}_{\tau\sim p_{\pi^*}}\left[\mathcal{R}(\tau)\right]-\mathbb{E}_{\tau\sim p_{\pi}}\left[\mathcal{R}(\tau)\right]\right)\Phi> 
        \gamma\left(\mathbb{E}_{\tau\sim p_{\pi}}\left[\gamma^{H-1}\phi(s_H)\mathbb{I}_E\right]-\mathbb{E}_{\tau\sim p_{\pi^*}}\left[\gamma^{H-1}\phi(s_H)\mathbb{I}_E\right]\right)
    \end{equation*}
    \begin{align*}
    \Phi>\frac{\gamma^{H-1}\left(\mathbb{E}_{\tau\sim p_{\pi}}\left[\phi(s_H)\mathbb{I}_E\right]-\mathbb{E}_{\tau\sim p_{\pi^*}}\left[\phi(s_H)\mathbb{I}_E\right]\right)}{\mathbb{E}_{\tau\sim p_{\pi^*}}\left[\mathcal{R}(\tau)\right]-\mathbb{E}_{\tau\sim p_{\pi}}\left[\mathcal{R}(\tau)\right]}.
\end{align*}
The above is true because we chose $\Phi\geq C$. We can repeat all the above reasoning for $\pi^{**}$ instead of $\pi^*$ such that $\pi\notin\{\pi^*,\pi^{**}\}$, and so on up until we exhaust all the policies in $\Pi_H$.

Now taking $\pi_i\in\Pi_H$ and $\pi_o\notin\Pi_H$, we have to prove:
\begin{align*}
    \mathbb{E}_{\tau\sim p_{\pi_i}}\left[\mathcal{R}(\tau)+\gamma^{h(\tau)}\phi(s_h^{\pi_i})\right]>\mathbb{E}_{\tau\sim p_{\pi_o}}\left[\gamma^H\phi(s_H)\right]~\forall\pi_o\notin\Pi_H
\end{align*}
that holds true if:
\begin{align*}
    \Phi>\frac{\gamma^{H-1}\left(\mathbb{E}_{\tau\sim p_{\pi_o}}\left[\phi(s_H)\mathbb{I}_E\right]-\mathbb{E}_{\tau\sim p_{\pi_i}}\left[\phi(s_H)\mathbb{I}_E\right]\right)}{\mathbb{E}_{\tau\sim p_{\pi_i}}\left[\mathcal{R}(\tau)\right]}
\end{align*}
that is smaller than the chosen $C$ for any $\pi_i$, $\pi_o$.
\end{proof}

\subsection{Proof of Theorem~\ref{thm:ordering_REMDPs_expct}}\label{appx:proof_thm_REMDPS}

\begin{proof}
Given the definition of $p_{\pi}(\tau)$ (see Section~\ref{sec:bias_analysis}), let $p_{\pi}^\infty(\tau)$ be its limit for $H$ going to infinity. Furthermore, let $\mathbb{E}_{p_{\pi_A}}\left[\mathcal{R}'(\tau_A)\right]$ and $\mathbb{E}_{p_{\pi_B}}\left[\mathcal{R}'(\tau_B)\right]$ be the performance of two generic policies in the reshaped MDP such that $\mathbb{E}_{\rho}\left[V^{\pi_A}(s)\right]>\mathbb{E}_{\rho}\left[V^{\pi_B}(s)\right]$ in the original MDP. Then, for the ordering to be preserved in the reshaped MDP, the following must hold:
    \begin{align}
        &\mathbb{E}_{p_{\pi_A}}\left[\mathcal{R}'(\tau_A)\right] > \mathbb{E}_{p_{\pi_B}}\left[\mathcal{R}'(\tau_B)\right] ~\forall \pi_A, \pi_B \nonumber \\
        &\mathbb{E}_{p_{\pi_A}}\left[\mathcal{R}(\tau_A) + \gamma^H\phi(s_H) - \cancel{\phi(s_0)}\right] > \mathbb{E}_{p_{\pi_B}}\left[\mathcal{R}(\tau_B) + \gamma^H\phi(s_H) - \cancel{\phi(s_0)}\right] ~\forall \pi_A, \pi_B \label{eq:ordering_preserving_cond} \\
        &\mathbb{E}_{p_{\pi_A}}\left[\mathcal{R}(\tau_A)\right] - \mathbb{E}_{p_{\pi_B}}\left[\mathcal{R}(\tau_B)\right] > \mathbb{E}_{p_{\pi_B}}\left[\gamma^H\phi(s_H)\right] - \mathbb{E}_{p_{\pi_A}}\left[\gamma^H\phi(s_H)\right] ~\forall \pi_A, \pi_B, \nonumber
    \end{align}
    where in~\eqref{eq:ordering_preserving_cond} we have used: $\mathcal{R}'(\tau) = \mathcal{R}(\tau) + \gamma^H\phi(s_H) -\phi(s_0)$.
    The potential is bounded, $\phi \in [0,\Phi]$. Hence, given $\mathcal{R}(\tau^\infty) = \sum_{t=H}^{\infty}\gamma^t r(s_t,a_t, s_{t+1})$, we have:
    \begin{align}
        &\mathbb{E}_{p_{\pi_A}}\left[\mathcal{R}(\tau_A)\right] - \mathbb{E}_{p_{\pi_B}}\left[\mathcal{R}(\tau_B)\right] >  \gamma^H \Phi ~\forall \pi_A, \pi_B \label{eq:bned_potential} \\
        &\mathbb{E}_{p_{\pi_A}}\left[\mathcal{R}(\tau_A)\right] - \mathbb{E}_{p_{\pi_B}}\left[\mathcal{R}(\tau_B)\right] + \mathbb{E}_{p_{\pi_A}^\infty}\left[ \mathcal{R}(\tau_A^\infty)\right] - \mathbb{E}_{p_{\pi_B}^\infty}\left[ \mathcal{R}(\tau_B^\infty)\right] > \gamma^H\Phi + \mathbb{E}_{p_{\pi_A}^\infty}\left[ \mathcal{R}(\tau_A^\infty)\right] - \mathbb{E}_{p_{\pi_B}^\infty}\left[ \mathcal{R}(\tau_B^\infty)\right] \nonumber\\
        &\mathbb{E}_{\rho}\left[V^{\pi_A}(s)\right] - \mathbb{E}_{\rho}\left[V^{\pi_B}(s)\right]> \gamma^H\Phi + \mathbb{E}_{p_{\pi_A}^\infty}\left[ \mathcal{R}(\tau_A^\infty)\right]\label{eq:bned_reward0}\\
        &\mathbb{E}_{\rho}\left[V^{\pi_A}(s)\right] - \mathbb{E}_{\rho}\left[V^{\pi_B}(s)\right]> \gamma^H\Phi + \frac{\gamma^H}{1-\gamma}\Bar{R}\label{eq:bned_reward1}\\
        &\gamma^H < \inf_{\substack{\pi_A, \pi_B: \\ \mathbb{E}_{\rho}\left[V^{\pi_A}(s)\right] > \mathbb{E}_{\rho}\left[V^{\pi_B}(s)\right]}}\frac{ \mathbb{E}_{\rho}\left[V^{\pi_A}(s)\right] - \mathbb{E}_{\rho}\left[V^{\pi_B}(s)\right] }{\Phi + \frac{\Bar{R}}{1-\gamma}} \nonumber \\
        &H > \log_{\gamma} \inf_{\substack{\pi_A, \pi_B: \\ \mathbb{E}_{\rho}\left[V^{\pi_A}(s)\right] > \mathbb{E}_{\rho}\left[V^{\pi_B}(s)\right]}}\frac{ \mathbb{E}_{\rho}\left[V^{\pi_A}(s)\right] - \mathbb{E}_{\rho}\left[V^{\pi_B}(s)\right] }{\Phi + \frac{\Bar{R}}{1-\gamma}}, \nonumber
    \end{align}
where, in~\eqref{eq:bned_potential}, we have used the fact that the potential is bounded within $[0, \Phi]$ to upper bound the right-hand side. Instead, in~\eqref{eq:bned_reward0} and~\eqref{eq:bned_reward1}, we have used the fact that the reward function is bounded within $[0, \Bar{R}]$ to upper bound the right-hand side.
\end{proof}

\section{Abstractions}\label{appx:abstractions}

\subsection{Grid World}\label{appx:grid_world}
In this section, we describe the 8-rooms abstraction, reported as a graph in Figure~\ref{fig:grid_world_abstraction}. Each node is associated with a room with the topology of the graph and the coloring faithfully representing Figure~\ref{fig:grid} (this also defines the aggregation function). The connections between nodes are bidirectional and represent the available actions. Additionally, any action has a $90\%$ chance of success and a $10\%$ failure probability. This means that, with probability $0.9$, the agent ends up in the intended target node, with probability $0.1$, instead, it will stay where it is. Closely following~\citet{Cipollone2023}, this stochastic behavior is implemented in the original 8-rooms environment of Figure~\ref{fig:grid} as well, but, in this case, the agent has a $4\%$ failure probability upon executing any action, and, when it fails, one among all the available actions is executed at random. Finally, from the practical point of view, it is worth pointing out that, in order for the potential to be maximum in the goal state as required by Theorem~\ref{thm:ordering_GOMDPs_expct}, we add in the abstraction a fictitious done action that gives us a reward of 1 and transitions us from $\alpha(s_G)$ into a fictitious terminal state of the abstraction itself (this also simplifies the implementation when dealing with multiple goals in the abstraction because all have to transition to the same terminal state).

\begin{figure}[!h]
    \centering
    \begin{tikzpicture}

      \node[circle, draw, fill=red!30, minimum size=1cm] (S) at (0,3) {S};
      \node[circle, draw, fill=violet!30, minimum size=1cm] (b) at (3,3) {$R_1$};
      \node[circle, draw, fill=yellow!30, minimum size=1cm] (r) at (0,0) {$R_2$};
      \node[circle, draw, fill=blue!30, minimum size=1cm] (y) at (3,0) {$R_3$};
      \node[circle, draw, fill=pink!30, minimum size=1cm] (p) at (5,1.5) {$R_4$};
      \node[circle, draw, fill=brown!30, minimum size=1cm] (o) at (7,3) {$R_5$};
      \node[circle, draw, fill=orange!30, minimum size=1cm] (B) at (7,0) {$R_6$};
      \node[circle, draw, fill=green!30, minimum size=1cm] (G) at (9,1.5) {G};
    
      \draw (S) -- (b);
      \draw (S) -- (r);
      \draw (r) -- (y);
      \draw (y) -- (p);
      \draw (p) -- (o);
      \draw (p) -- (B);
      \draw (o) -- (B);
      \draw (B) -- (G);
    
    \end{tikzpicture}
    \vspace{0.15in}
    \caption{Grid World abstraction.}
    \label{fig:grid_world_abstraction}
    \vspace{0.25in}
\end{figure}


\subsection{Q*Bert}\label{appx:qbert}
\begin{figure}
    \centering
    \begin{subfigure}[b]{0.15\textwidth}
        \includegraphics[scale=0.5]{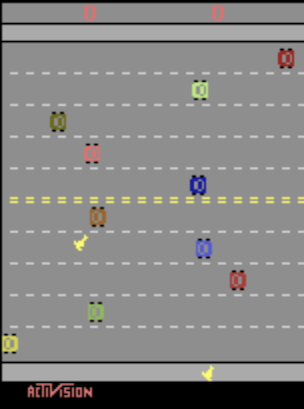}
        \caption{Freeway.}
        \label{fig:freeway_game}
    \end{subfigure}
    \begin{subfigure}[b]{0.15\textwidth}
        \includegraphics[scale=0.5]{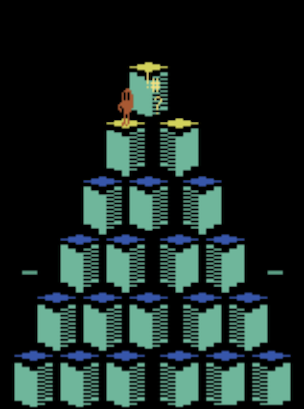}
        \caption{Q*Bert.}
        \label{fig:qbert_game}
    \end{subfigure}
    \begin{subfigure}[b]{0.15\textwidth}
        \includegraphics[scale=0.5]{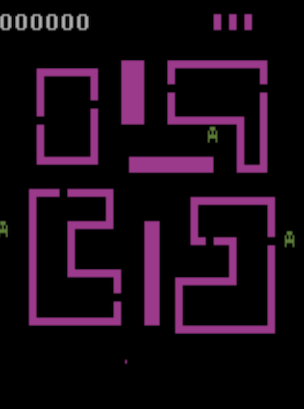}
        \caption{Venture.}
        \label{fig:venture_game}
    \end{subfigure}
    \vspace{0.15in}
    \caption{The tackled ALE games.}
    \vspace{0.2in}
\end{figure}
In \textit{Q*Bert} (Figure~\ref{fig:qbert_game}), the agent can jump onto some tiles over a grid. By jumping onto tiles, the agent is able to change their color. The goal of the agent is to change every tile on the grid to a target color of interest. This will make it succeed and go to the next level. The agent is awarded $25$ points for each tile turned to the target color. Furthermore, it is awarded a bonus of $3100$ points on level completion. Unfortunately, while the agent attempts to accomplish its task, there are enemies attempting at its life or undoing its tile-coloring work.

\subsubsection{Abstracting Q*Bert}
In order to abstract away what the agent has to accomplish in \textit{Q*Bert}, we neglect all the enemies and consider only the positions of the tiles on the grid. In a nutshell, the agent can move between tiles, changing their color upon visiting them. There are $21$ positions the agent can be in, four movement actions (left, right, up, and down), and one action that simulates the death of the agent, bringing it back to the initial position without undoing the coloring that has been done so far. The goal of the agent is to visit every tile in order to color it with the target color. The reward in this abstraction will be $0$ for every step except upon succeeding with the task, where the reward is $1$. This abstraction has $1,172,830$ states. We solved it using VI, and the $\epsilon$-optimal value function is used to guide the agent through reshaping in the original task.

\subsubsection{The Abstraction Details}

The nodes in Figure~\ref{fig:qbert_transition_graph} represent the tiles the agent has to color. The agent starts at tile $1$ that has coordinates $(77, 25)$ (see Table~\ref{table:id2position}) and it can move according to the connections reported in the graph of Figure~\ref{fig:qbert_transition_graph}. For instance, if the agent is at node $5$, it can move to $3$ by executing the \textit{up} action, move to $2$ by executing \textit{left}, move to $9$ by executing \textit{right}, move to $8$ by executing \textit{down}, and, finally, \textit{die} there by moving back directly to $1$. To avoid cluttering, the dying action has been reported only for node $6$ in Figure~\ref{fig:qbert_transition_graph}; it applies to all the nodes, and it does not change the color of the landing tile (which is always $1$). All the other movement actions will change the color of the landing tile, and they have been reported only for node $5$ for the sake of readability (the pattern is the same for every node except for the borders where missing arcs mean that the associated actions are not available for execution). The state space of the abstraction is then represented by the $x$ and $y$ coordinates of the agent corresponding to the coordinates reported in Table~\ref{table:id2position} and $21$ variables representing the colors of the tiles that start all at the same source color and need to be transitioned all to the target by the agent. Formally, being $s$ the state vector of the original task, we have:
\begin{align*}
    \Bar{s} &= \left(s_{43}, s_{67},  s_{21}, s_{52}, s_{54}, s_{83}, s_{85}, s_{87}, s_{98}, s_{100}, s_{102}, s_{104}, s_{1}, s_{3}, s_{5}, s_{7}, s_{9}, s_{32}, s_{34}, s_{36}, s_{38}, s_{40}, s_{42}\right)\\
    &= (\underbrace{x, y,}_{\text{Coordinates}} \underbrace{1, 2, 3, 4, 5, 6, 7, 8, 9, 10, 11, 12, 13, 14, 15, 16, 17, 18, 19, 20, 21}_{\text{Node IDs reported in Figure~\ref{fig:qbert_transition_graph}}})
\end{align*}

\begin{figure}[!th]
    \centering
    \begin{tikzpicture}[node distance=0.8cm]
      \foreach \i/\y/\z in {1/0/0, 3/-1/1, 2/-1/-1, 6/-2/2, 5/-2/0, 4/-2/-2, 10/-3/3, 9/-3/1, 8/-3/-1, 7/-3/-3, 15/-4/4, 14/-4/2, 13/-4/0, 12/-4/-2, 11/-4/-4, 21/-5/5, 20/-5/3, 19/-5/1, 18/-5/-1, 17/-5/-3, 16/-5/-5} {
        \node[circle, draw, minimum size=0.5cm] (\i) at (\z, \y) {\i};
      }
    
      \draw (1) -- (2);
      \draw (1) -- (3);
      
      \draw (2) -- (4);
      \draw (2) -- (5);
      \draw (3) -- (5);
      \draw (3) -- (6);
    
      \draw (4) -- (2);
      \draw (4) -- (7);
      \draw (4) -- (8);
      \draw (5) -- (2) node[midway] {left};
      \draw (5) -- (3) node[midway] {up};
      \draw (5) -- (8) node[midway] {down};
      \draw (5) -- (9) node[midway] {right};
      \draw (6) -- (3);
      \draw (6) -- (9);
      \draw (6) -- (10);
      \draw[->] (6) to[out=45, in=45] node[pos=0.5, above] {death} (1);
    
      \draw (7) -- (4);
      \draw (7) -- (11);
      \draw (7) -- (12);
      \draw (8) -- (4);
      \draw (8) -- (5);
      \draw (8) -- (12);
      \draw (8) -- (13);
      \draw (9) -- (5);
      \draw (9) -- (6);
      \draw (9) -- (13);
      \draw (9) -- (14);
      \draw (10) -- (6);
      \draw (10) -- (14);
      \draw (10) -- (15);
    
      \draw (11) -- (7);
      \draw (11) -- (16);
      \draw (11) -- (17);
      \draw (12) -- (7);
      \draw (12) -- (8);
      \draw (12) -- (17);
      \draw (12) -- (18);
      \draw (13) -- (8);
      \draw (13) -- (9);
      \draw (13) -- (18);
      \draw (13) -- (19);
      \draw (14) -- (9);
      \draw (14) -- (10);
      \draw (14) -- (19);
      \draw (14) -- (20);
      \draw (15) -- (10);
      \draw (15) -- (20);
      \draw (15) -- (21);
    \end{tikzpicture}
    \vspace{0.15in}
    \caption{Q*Bert Transition Graph. The Node ID correspond to the position in $x$, $y$ coordinates reported in Table~\ref{table:id2position}.}
    \label{fig:qbert_transition_graph}
    \vspace{0.3in}
\end{figure}


\begin{table*}[!th]
    \caption{Node ID to position mapping.}\label{table:id2position}
    \vspace{0.1in}
	\centering
	\begin{tabular}{| l | c | l | c|}
		\hline	
		Node ID & $(x, y)$ & Node ID & $(x, y)$ \\
        \hline
		1 & (77, 25) & 12& (53, 137)\\
		2 & (65, 53) & 13& (77, 137)\\
        3 & (93, 53) & 14& (105, 137)\\
        4 & (53, 81) & 15& (129, 137)\\
        5 & (77, 81) & 16& (16, 165)\\
        6 & (105, 81) & 17& (41, 165)\\
        7 & (41, 109) & 18& (65, 165)\\
        8 & (65, 109) & 19& (93, 165)\\
        9 & (93, 109) & 20& (117, 165)\\
        10 & (117, 109) & 21& (141, 165)\\
        \cline{3-4}
        11 & (29, 137) &\multicolumn{2}{|c|}{}\\
        \hline
	\end{tabular}
 
\end{table*}

\subsubsection{The Coloring Across Different Levels}

In the context of the original task, on every level, we have a different source-target color pair that can be detected automatically at the first few interactions of the agent with the new level. This is done by monitoring the transitions the agent is experiencing. Whenever we go to a new level ($57^{th}$ component of the RAM vector), we grab the color of tile $1$, then as soon as we transition from $1$ to either $2$ or $3$, and the color of one of those two tiles has changed \wrt the one we grabbed in $1$, we save the source-target color pair. This automatic mechanism allows us to make our abstraction agnostic to the color change across different levels, so that the abstraction is kept fixed throughout the whole learning process and does not depend on the level the agent is at. Finally, notice that, handling all possible different source-target color pairs not only enables us to have a single abstraction for all the levels, but it also enables a pre-processing step that translates all the source-target color pairs into a single one to help the network better leverage similarities across levels. In Figure~\ref{fig:qbert}, we have reported results using abstraction and pre-processing combined, whereas in Figure~\ref{fig:qbert_ablation}, we report an ablation study to better assess the contribution of the single components: only abstraction and only pre-processing. As we may see, with reshaping, we are able to greatly improve performances \wrt the basic DQN solution. The same may be stated for what concerns leveraging the pre-processing at the cost of a greater variance. The two techniques combined yield the best performance surpassing even CNN-based DQN as stated in Section~\ref{sec:results}.
\begin{figure*}[!h]
	\centering
	\begin{tikzpicture}
	\begin{axis}[
	width=0.6\textwidth,
	height=5cm,
	xmin=0,
	xmax=50000000,
	xtick={0,10000000,...,50000000},
	ymin=0,
	ymax=15000,
	ytick={0,5000,...,15000},
	%
	%
	xlabel=Interactions,
	ylabel=Average Return,
	%
	mark options={scale=0.3},
	cycle list name = custom,
	scaled x ticks=base 10:-6,
    legend style={at={(rel axis cs:0.18,1)},draw=none,fill=none,name=legend},
    every x tick scale label/.style={at={(xticklabel cs:0.9,5pt)},yshift=-0.07em,xshift=-7.4em,left,inner sep=0pt},
    xtick scale label code/.code={\scalebox{0.9}{$(\pgfkeysvalueof{/pgfplots/tick scale binop} 10^{#1})$}},
    every y tick scale label/.style={at={(yticklabel cs:0.9,5pt)},yshift=-0.5em,xshift=0.22em,left,inner sep=0pt},
    ytick scale label code/.code={\rotatebox{90}{\scalebox{0.9}{$(\pgfkeysvalueof{/pgfplots/tick scale binop} 10^{#1})$}}}
	]
    \addplot
	table [x=i,y=mean-DQN,col sep=comma] 
	{csv/atari/qbert.csv};
	\addplot[name path=top,draw=none,forget plot]
	table [name path=top,x=i,y expr=\thisrow{mean-DQN}+\thisrow{std-DQN},col sep=comma] 
	{csv/atari/qbert.csv};
	\addplot[name path=bot,draw=none,forget plot]
	table [name path=top,x=i,y expr=\thisrow{mean-DQN}-\thisrow{std-DQN},col sep=comma] 
	{csv/atari/qbert.csv};
	\addplot [forget plot, draw=none,opacity=0.4,pattern=north east lines,fill=green1!60!black]
	fill between[of=top and bot];
	\addplot
	table [x=i,y=mean-DQN-Reshaped,col sep=comma] 
	{csv/atari/qbert.csv};
	\addplot[name path=top,draw=none,forget plot]
	table [name path=top,x=i,y expr=\thisrow{mean-DQN-Reshaped}+\thisrow{std-DQN-Reshaped},col sep=comma] 
	{csv/atari/qbert.csv};
	\addplot[name path=bot,draw=none,forget plot]
	table [name path=top,x=i,y expr=\thisrow{mean-DQN-Reshaped}-\thisrow{std-DQN-Reshaped},col sep=comma] 
	{csv/atari/qbert.csv};
	\addplot [forget plot, draw=none,opacity=0.4,pattern=north east lines,fill=orange]
	fill between[of=top and bot];
 	\addplot
	table [x=i,y=mean-DQN-Preproc-Reshaped,col sep=comma] 
	{csv/atari/qbert.csv};
	\addplot[name path=top,draw=none,forget plot]
	table [name path=top,x=i,y expr=\thisrow{mean-DQN-Preproc-Reshaped}+\thisrow{std-DQN-Preproc-Reshaped},col sep=comma] 
	{csv/atari/qbert.csv};
	\addplot[name path=bot,draw=none,forget plot]
	table [name path=top,x=i,y expr=\thisrow{mean-DQN-Preproc-Reshaped}-\thisrow{std-DQN-Preproc-Reshaped},col sep=comma] 
	{csv/atari/qbert.csv};
	\addplot [forget plot, draw=none,opacity=0.4,pattern=north east lines,fill=blue1]
	fill between[of=top and bot];
	\addplot
	table [x=i,y=mean-DQN-Preproc,col sep=comma] 
	{csv/atari/qbert.csv};
	\addplot[name path=top,draw=none,forget plot]
	table [name path=top,x=i,y expr=\thisrow{mean-DQN-Preproc}+\thisrow{std-DQN-Preproc},col sep=comma] 
	{csv/atari/qbert.csv};
	\addplot[name path=bot,draw=none,forget plot]
	table [name path=top,x=i,y expr=\thisrow{mean-DQN-Preproc}-\thisrow{std-DQN-Preproc},col sep=comma] 
	{csv/atari/qbert.csv};
	\addplot [forget plot, draw=none,opacity=0.4,pattern=north east lines,fill=violet]
	fill between[of=top and bot];
    \addlegendentry{NR}
 	\addlegendentry{R}
    \addlegendentry{R+P}
    \addlegendentry{P}
    \end{axis}

	\end{tikzpicture}

\vspace{0.15in}
\caption{Average return achieved by simple fully-connected DQN without Reshaping (NR), with Reshaping (R), with Pre-processing (P), with Reshaping and Pre-processing (R+P) $\pm 1$ standard deviation computed using $10$ independent runs in \textit{Q*Bert}.}
\label{fig:qbert_ablation}
\end{figure*}
\begin{algorithm}[!h]
	\caption{Q*Bert F function}
	\label{alg:qbert_F}
	\begin{algorithmic}[1]
            \STATE {\bfseries Input:} $\left<\Bar{s}, a, \Bar{s}'\right>$ the transition, $\mathcal{G}$ the abstraction's transition graph, $\phi = \mathcal{V}_\epsilon$ the potential function
        \IF {$\Bar{s}_{x,y}$ not in $\mathcal{G}$}
            \STATE \textbf{return} $0$
        \ENDIF
        \IF{$\Bar{s}_{x,y}==\Bar{s}'_{x,y}$}
            \STATE \textbf{return} $0$
        \ENDIF
        \IF{$a$ not in $\mathcal{G}\left[s\right]$}
            \IF {$\Bar{s}$ in $\phi$}
                \STATE \textbf{return} $-\phi(\Bar{s})$
            \ELSE
                \STATE \textbf{return} $0$
            \ENDIF
        \ELSIF{$\Bar{s}$ in $\phi$ and $\text{\textbf{next\_state}}(\Bar{s}, \mathcal{G}\left[\Bar{s}_{x,y}\right]\left[a\right])$ in $\phi$}
            \STATE $F = \gamma\phi(\text{\textbf{next\_state}}(\Bar{s}, \mathcal{G}\left[\Bar{s}_{x,y}\right]\left[a\right]))-\phi(\Bar{s})$
            \IF{F==0}
                \STATE \textbf{return} $0.1$
            \ENDIF
            \STATE \textbf{return} $F$
        \ELSE
            \STATE \textbf{return} $0$
        \ENDIF
	\end{algorithmic}
\end{algorithm}
\subsubsection{The Aggregation Function}

In order to complete the map between $\mathcal{S}$ and $\mathcal{\Bar{S}}$, we need to account for the transition dynamics between the nodes connected via the actions \textit{up}, \textit{right}, \textit{down}, and \textit{left} in Figure~\ref{fig:qbert_transition_graph}.\footnote{Note that this mapping induces a partition over the original state space where each $\Bar{s}$ is a class representative.} In our abstraction that transition is instantaneous, but in the real task it is not. Let us see an example:

\begin{equation*}
    \underbrace{(77, 25)}_{\Bar{s}^1_{x,y}}, \underbrace{(78, 24), (82, 20), (86, 22), (89, 27), (89, 35), (89, 43), (89, 51), (93, 53)}_{\Bar{s}^2_{x,y}}.
\end{equation*}
This represents the transitions undergone by the $x$ and $y$ coordinates of the agent when moving from tile $1$ to tile $3$ in the original task. This is the same transition, but at level $2$:
\begin{equation*}
    \underbrace{(77, 25)}_{\Bar{s}^1_{x,y}}, \underbrace{(81, 21), (85, 21), (89, 25), (89, 33), (89, 41), (89, 49), (93, 53)}_{\Bar{s}^2_{x,y}}.
\end{equation*}
Furthermore, to make things even harder, they keep on changing throughout the different levels. Since this transition is mandatory, and the agent cannot deviate from it, we are going to define the aggregation function taking the endpoints as representatives and assigning the in-between states as shown above. Formally, let $\sigma_1, \dots, \sigma_k = \sigma \in \varsigma$ be a possible sequence of succeeding $x$ and $y$ coordinates allowed by the game between $(77, 25)$ and $(93, 53)$, then the aggregation function for this family of transitions will map $(77, 25) \rightarrow (77, 25)$ and $\{\sigma, (93, 53)\} \rightarrow (93, 53)~\forall \sigma \in \varsigma $, and so on for every other node in Figure~\ref{fig:qbert_transition_graph}. In order to induce the above described partitioning, we are going to use the action chosen by the agent. Indeed, given the agent is at $(77, 25)$, if it executes the action \textit{right}, it will observe the appropriate $\sigma \in \varsigma$ and then eventually $(93, 53)$. This implies that we are able to properly reshape between $(77, 25)$ and the first $x$ and $y$ coordinate pair, $\sigma_1$, in the $\sigma$ sequence. The reshaping of all the other transitions, namely, $\left<\sigma_2, \sigma_3\right>,\dots,\left<\sigma_k, (93, 53)\right>$, is set to $0$.

The pseudo-code that shows an implementation of the $F$ function (see Section~\ref{sec:PBRS}) for \textit{Q*Bert} is shown in Algorithm~\ref{alg:qbert_F}, where we are assuming that $\mathcal{G}$ is an hash-map representing the transition graph reported in Figure~\ref{fig:qbert_transition_graph} without the death arcs. $\mathcal{V}_{\epsilon}$ is the solution produced by VI applied to the abstraction. Finally, the function \textbf{next\_state}, given the current abstract state and the $x$ and $y$ coordinates the agents ends up at after executing $a$, returns the abstract next state by appropriately coloring the required tile in $\Bar{s}$. In Line $1$, we do not reshape if the starting state coordinates are out of our abstraction. In Line $5$, we do not reshape if the agent stays still; this is done not to incentivize continuous movement. In Lines $8$ to $13$, if the agent moved out of the abstraction from a known state, we penalize it with $-\phi(s)$. This is because it is jumping out of the grid and consequently dying. Otherwise, if the state is unknown, we do not reshape (Line $12$). In Line $14$, we handle the transition between one abstract tile and another. If the value of $F$ is $0$ we boost it to $0.1$ (this happens every time we take an action that is optimal according to $\mathcal{V}_\epsilon$), otherwise, we simply return $F$.\footnote{Notice that our abstraction is a goal-oriented MDP solved with the same discount factor used in the original task, see Section~\ref{appx:exp}.} In all other remaining cases $0$ is returned.

Changing the reshaping function as described above deviates from theory and it may introduce bias, but as we have seen in the experiments of Figure~\ref{fig:qbert}, the achieved performance is better than CNN-based DQN. Additionally, it is worth noticing that, in Algorithm~\ref{alg:qbert_F}, for the sake of readability, we did not report the pseudo-code to handle all the possible source-target color pairs. Finally, we normalize the reward of the original task dividing it by $500$ (the highest reward obtainable in a single step). This does not change the task from the optimization point of view and make everything on a similar scale \wrt reshaping values.

\subsection{Venture}\label{appx:venture}
In \textit{Venture} (Figure~\ref{fig:venture_game}), the agent is placed into a main hall that must be explored in order to find some rooms where treasures are hidden. In order to progress, the agent has to enter each room, collect its treasure, and exit the room without being killed by the enemies and pitfalls present in every room (main hall included). The agent will receive $200$ points for each collected treasure and $100$ points for each killed enemy in the treasure rooms (these last points are awarded only if the killing happens after the treasure has been collected). This reward system makes the task a sparse reward problem, which is much harder to deal with for RL agents.

\subsubsection{Abstracting Venture}
In the context of \textit{Venture}, the objective of the agent is twofold: exploration to find the rooms' doors in the main hall and treasure collection in the rooms. Whenever the agent enters a room, collects its treasure, and exits without being killed, the room gets locked and can no longer be accessed. Therefore, we can define a simple abstraction modeling only the problem of finding the rooms' doors. This can be done by considering the position of the agent, whether a room is locked or not, the room the agent is currently in, and the boundaries of the main hall. Again, as we did in \textit{Q*Bert}, we will neglect the enemies that chase the agent in the main hall. In the context of the above-described abstraction, visiting a room via one of its doors makes that room transition into the previously-mentioned locked state. Hence, we are abstracting away what the agent must do inside the rooms. The available actions are the movements along the eight main cardinal directions, which will change the components of the agent's positional vector in the appropriate way (increment/decrement by one unit or no change). The goal is to visit all the rooms. When this happens, the agent receives a reward of $1$, $0$ everywhere else. This abstraction has $106,929$ states and is again solved $\epsilon$-optimally with VI. The obtained value function is used to do reshaping as stated in Equation~\ref{eq:reshaping}. The aggregation function is:
\begin{equation*}
    \alpha(s) = (s_{85}, s_{26}, s_{90}, s_{17}),
\end{equation*}
which means we are considering only the $85^{th}$, $26^{th}$, $90^{th}$, and $17^{th}$ components ($x$ and $y$ coordinates, room id, and locked rooms, respectively \citep{anand2019unsupervised}) of the original RAM vector.

In the same spirit as the abstraction above, we could build a second layer of abstraction for the rooms within the main hall. Unfortunately, neglecting the monsters does not allow us to build a good abstraction due to their aggressive behaviour. Indeed, reshaping in these rooms without accounting for the monsters would provide misleading feedback to the agent because the shortest path to the treasure and back out of the room heavily depends on the enemies' positions. Thus, we lean on the agent to find a way to kill the monsters, collect the treasure, and exit. 

\subsubsection{The Abstraction Details}

\begin{figure}[!t]
    \centering
    \begin{tikzpicture}[scale=0.1]
      \draw[thick] (1,78) rectangle (160,0);
    
      \filldraw[fill=gray!30] (16,44) -- (16,8) -- (65,8) -- (65,24) -- (41,24) -- (41,32) -- (61,32) -- (61,44) -- cycle;
      \node at (16,44) [below right, font=\tiny] {(16,34)};
      \node at (16,8) [above right, font=\tiny] {(16,70)};
      \node at (65,8) [above left, font=\tiny] {(65,70)};
      \node at (65,24) [below left, font=\tiny] {(65,54)};
      \node at (41,24) [below, font=\tiny] {(41,54)};
      \node at (41,32) [above, font=\tiny] {(41,46)};
      \node at (61,32) [above left, font=\tiny] {(61,46)};
      \node at (61,44) [below left, font=\tiny] {(61,34)};
      \node at (65,16) [right, font=\tiny, text=red] {(65,62)};
      \node at (36,44) [above, font=\tiny, text=red] {(36,34)};
      \node at (30,29) {Room 1};
    
      \filldraw[fill=gray!30] (100,42) -- (145,42) -- (145,6) -- (92,6) -- (92,22) -- (120,22) -- (120,28) -- (100,28) -- cycle;
      \node at (100,42) [below right, font=\tiny] {(100,36)};
      \node at (145,42) [below left, font=\tiny] {(145,36)};
      \node at (145,6) [above left, font=\tiny] {(145,72)};
      \node at (92,6) [above right, font=\tiny] {(92,72)};
      \node at (92,22) [below right, font=\tiny] {(92,56)};
      \node at (120,22) [below left, font=\tiny] {(120,56)};
      \node at (120,28) [above left, font=\tiny] {(120,50)};
      \node at (100,28) [above right, font=\tiny] {(100,50)};
      \node at (112,28) [below, font=\tiny, text=red] {(112,50)};
      \node at (145,30) [right, font=\tiny, text=red] {(145,48)};
      \node at (131,27) {Room 2};
    
      \filldraw[fill=gray!30] (88,76) -- (141,76) -- (141,46) -- (124,46) -- (124,58) -- (88,58) -- cycle;
      \node at (88,76) [below right, font=\tiny] {(88,2)};
      \node at (141,76) [below left, font=\tiny] {(141,2)};
      \node at (141,46) [above left, font=\tiny] {(141,32)};
      \node at (124,46) [above right, font=\tiny] {(124,32)};
      \node at (124,58) [above right, font=\tiny] {(124,20)};
      \node at (88,58) [above right, font=\tiny] {(88,20)};
      \node at (88,69) [left, font=\tiny, text=red] {(88,9)};
      \node at (141,68) [right, font=\tiny, text=red] {(141,10)};
      \node at (118,67) {Room 3};
    
      \filldraw[fill=gray!30] (20,74) -- (54,74) -- (54,48) -- (20,48) -- cycle;
      \node at (20,74) [below right, font=\tiny] {(20,4)};
      \node at (54,74) [below left, font=\tiny] {(54,4)};
      \node at (54,48) [above left, font=\tiny] {(54,30)};
      \node at (20,48) [above right, font=\tiny] {(20,30)};
      \node at (54,64) [right, font=\tiny, text=red] {(54,14)};
      \node at (20,60) [left, font=\tiny, text=red] {(20,18)};
      \node at (37,61) {Room 4};
    
      \filldraw[fill=gray!30] (64,76) -- (79,76) -- (79,58) -- (64,58) -- cycle;
      \node at (64,76) [left, font=\tiny] {(64,2)};
      \node at (79,76) [right, font=\tiny] {(79,2)};
      \node at (79,58) [right, font=\tiny] {(79,20)};
      \node at (64,58) [left, font=\tiny] {(64,20)};
    
      \filldraw[fill=gray!30] (68,52) -- (114,52) -- (114,46) -- (68,46) -- cycle;
      \node at (68,52) [above, font=\tiny] {(68,26)};
      \node at (114,52) [above, font=\tiny] {(114,26)};
      \node at (114,46) [below, font=\tiny] {(114,32)};
      \node at (68,46) [below, font=\tiny] {(68,32)};
    
      \filldraw[fill=gray!30] (76,36) -- (85,36) -- (85,8) -- (76,8) -- cycle;
      \node at (76,36) [above, font=\tiny] {(76,42)};
      \node at (85,36) [above, font=\tiny] {(85,42)};
      \node at (85,8) [below, font=\tiny] {(85,70)};
      \node at (76,8) [below, font=\tiny] {(76,70)};
    \end{tikzpicture}
    \vspace{0.15in}
    \caption{Venture abstraction: main hall.}
    \label{fig:venture_main_hall}
    \vspace{0.15in}
\end{figure}
In Figure~\ref{fig:venture_main_hall}, we can find a complete description of the main-hall abstraction with all the rooms, the obstacles, and, in red, the doors to access the rooms. The agent starts in position $(68, 76)$. Furthermore, notice that whenever we are over a door position, by playing the appropriate action, we will enter the room and end up over the corresponding door position on the other side. For instance, if the agent is at $(65,62)$, by playing the action \textit{left}, it will end up being at $(129, 63)$ in Room 1 (see Table~\ref{table:venture_door2door}). From there, the agent will only be able to go back by playing \textit{right} effectively locking the room, as if it was solved. In Algorithm~\ref{alg:venture_F}, we report the pseudo-code describing an implementation of the reshaping mechanism in the context of \textit{Venture}. Here, the only potential sources of bias are due to Line $3$ of Algorithm~\ref{alg:venture_F}, where we return 0 if the agent stays still to avoid incentives to continuous movement, and in Algorithm~\ref{alg:venture_F_8}, where we return $0$ if one of the two abstract states is out of the abstraction. This was done because during preliminary experimental studies it appeared to give better performances.

\begin{table*}[!t]\caption{Main hall door to room door position mapping}\label{table:venture_door2door}
    \vspace{0.1in}
	\centering
	\begin{tabular}{| c | c | c |}
		\hline	
		Main Hall Door & Action & Room Door \\
        \hline
		(65, 62) & \textit{left}& (129, 63)\\
		(36, 34) & \textit{bottom}& (58, 11)\\
        (112, 50) & \textit{up}& (62, 18)\\
        (145, 48) & \textit{left}& (129, 13)\\
        (141, 10) & \textit{left}& (129, 15)\\
        (88, 9) & \textit{right}& (31, 15)\\
        (54, 14) & \textit{left}& (117, 39)\\
        (20, 18) & \textit{right}& (43, 39)\\
        \hline
	\end{tabular}
 
\end{table*}

\begin{algorithm}[!t]
	\caption{Venture F function}
	\label{alg:venture_F}
	\begin{algorithmic}[1]
        \STATE {\bfseries Input:} $\left<s, a, s'\right>$ the transition, $\phi^0 = \mathcal{V}^0_\epsilon$ the potential function associated with the main hall
        \IF {$s_{85,26} == s'_{85,26}$}
            \STATE \textbf{return} $0$
        \ENDIF
        \IF {$s_{90} == 8$}
            \STATE \textbf{return $F_8$} $(s, a, s', \phi^0)$ 
        \ENDIF
        \STATE \textbf{return} $0$
	\end{algorithmic}
\end{algorithm}

\begin{algorithm}[!t]
	\caption{Venture $F_8$ function}
	\label{alg:venture_F_8}
	\begin{algorithmic}[1]
        \STATE {\bfseries Input:} $\left<s, a, s'\right>$ the transition, $\phi = \mathcal{V}^0_\epsilon$ the potential function associated to the main hall
        \STATE $\Bar{s} = s_{85, 26, 90, 17}$, $\Bar{s}' = s'_{85, 26, 90, 17}$
        \IF {$\Bar{s}$ in $\phi$ and $\Bar{s}'$ in $\phi$}
            \STATE $F = \gamma\phi(\Bar{s}') -\phi(\Bar{s})$
            \STATE \textbf{return $10^3F$} 
        \ENDIF
        \STATE \textbf{return} $0$
	\end{algorithmic}
\end{algorithm}

\section{Experimental Details}\label{appx:exp}
\subsection{Value Iteration}
In this section, we will describe all the VI configurations to solve the above-described abstractions. For all the abstractions, we have an $\epsilon$ of $10^{-7}$, whereas the discount factor $\gamma$ is $0.98$ for \textit{Freeway} and \textit{Venture}, $0.99$ for \textit{Q*Bert}, and $0.9$ for the Grid World.
\subsection{Deep Q-Networks}

Regarding the ALE benchmark, we used Mushroom-RL~\citep{JMLR:v22:18-056} implementation of DQN with the hyperparameters reported in Table~\ref{table:hyperparams}. Notice that, for \textit{Freeway}, the \textbf{test exploration rate} has been lowered to $0.01$ because it yielded better performance in evaluation for the CNN architecture. Furthermore, being \textit{Freeway} a very simple task to solve, we lowered the \textbf{evaluation frequency} to $100000$ and the \textbf{test samples} to $50000$. This enables more frequent probing of the agent's performance, allowing a higher resolution for the learning curve, which, in turn, allows us to better visualize and understand the learning dynamics, especially in the early epochs where it matters the most. 

Adam~\citep{KingBa15} is used as optimizer with a learning rate of $0.0001$ and $(\beta_1, \beta_2) = (0.9, 0.999)$. In the case of RAM states, we have fully-connected neural networks with the following structure: one input layer of $128\cdot4$ neurons, two hidden layers of $128$ and $256$ neurons (first and second hidden layers, respectively), both with ReLU activation functions, and an output layer whose size depends on the number of actions (\textit{Freeway} has $3$, \textit{Q*Bert} $6$, and \textit{Venture} $18$). In the case of image states, we have an input shape of $(4, 84, 84)$, and the CNN structure is as follows: $3$ convolutional layers and $2$ linear layers. The first convolutional layer has $4$ input channels, $32$ output channels, a kernel size of $8$, and stride of $4$; the second one has $32$ input channels, $64$ output channels, a kernel size of $4$, and stride of $2$; the third one has $64$ input channels, $64$ output channels, a kernel size of $3$, and stride of $1$. The first linear layer has $3136$ neurons and the second $512$. The output is the number of actions, which depends on the task and it is already stated above. Every layer has a ReLU activation function except the output layer that does not have any.

\begin{table*}[!h]\caption{DQN Hyperparameters}\label{table:hyperparams}
	\centering
    \vspace{0.1in}
	\begin{tabular}{| l | c | r |}
		\hline	
		Parameter & Value & Description\\
        \hline
        $\gamma$ & 0.99 & The discount factor \\
		initial replay size & 50000& Initial number of samples in the replay memory\\
		max replay size & 500000& Replay memory max capacity\\
        batch size & 32 & Number of samples for each fit of the network\\
        history Length & 4 & Number of frames a state is made of\\
        target update frequency & 10000 & Number of collected samples before a new update of the target network\\
        train frequency & 4 & Number of collected samples before a new fit of the neural network\\
        final exploration frame & 1000000& The number of samples while the exploration rate decays\\
        initial exploration rate & 1 & The initial exploration rate value\\
        final exploration rate & 0.1 & The final exploration rate value \\
        max no-op actions & 30 & Maximum amount of no-ops executed at the beginning of each episode \\
        max steps & 50000000 & Total number of interactions\\
        evaluation frequency & 250000 & Total number of interactions before evaluating the policy\\
        test samples & 125000 & Interactions during policy evaluation\\
        test exploration rate & 0.05 & Exploration rate used while evaluating the policy\\
        \hline
	\end{tabular}
 
\end{table*}

\subsection{Q-Learning}
In this section, we report all the hyperparameters used to train Q-Learning in the context of the 8-rooms task. The learning rate starts at $0.85$ for vanilla Q-Learning, $0.7$ for our solution, and $1$ for \citet{Cipollone2023} solution. The different starting points have been optimized for each algorithm independently using a grid search over the following values: $[1, 0.85, 0.7, 0.55, 0.4, 0.25, 0.1]$. They all decay to $0.01$. The parameter $\epsilon$ of the $\epsilon$-greedy policy starts at $1$ and decays to $0.1$. Both the learning rate and the exploration parameter decay over $300000$ time steps (the whole learning process). Finally, we have an episode maximum length of $70$ and a discount factor of $0.98$.

\end{document}